\crefname{appsec}{Appendix}{Appendices}
\DeclareMathOperator*{\argmin}{arg\,min}
\newcommand{\A}{\mathcal{A}}
\newcommand{\D}{\mathcal{D}}
\DeclareMathOperator{\E}{\mathbb{E}}
\newcommand{\N}{\mathcal{N}}
\newcommand{\W}{\mathcal{W}}
\newcommand{\R}{\mathbb{R}}
\newcommand{\st}{\,\mathrm{s.t.}\,}
\newcommand{\tp}{^\mathsf{T}}
\DeclareMathOperator{\Tr}{Tr}
\let\tr\Tr
\newcommand{\rad}{\mathfrak{R}}
\newcommand{\samp}{\mathbf{S}}
\newcommand{\sampb}{\mathbf{\tilde{S}}}
\newcommand{\sampsdelta}{\mathcal{S}_{n,\delta}}
\newcommand{\wmr}{\hat{w}_{\mathit{MR}}}
\newcommand{\wmn}{\hat{w}_{\mathit{MN}}}
\newcommand{\wmnS}{\hat{w}_{\mathit{MN},S}}
\newcommand{\wmnJ}{\hat{w}_{\mathit{MN},J}}
\newcommand{\asge}{\stackrel{a.s.}{\ge}}
\newcommand{\aseq}{\stackrel{a.s.}{=}}
\newcommand{\asto}{\stackrel{a.s.}{\to}}
\newcommand\given{\@ifstar{\mathrel{}\middle|\mathrel{}}{\mid}}
\DeclareRobustCommand{\abs}{\@ifstar\@abs\@@abs}
\newcommand{\@abs}[1]{\left\lvert #1 \right\rvert}
\newcommand{\@@abs}[1]{\lvert #1 \rvert}
\DeclareRobustCommand{\norm}{\@ifstar\@norm\@@norm}
\newcommand{\@norm}[1]{\left\lVert #1 \right\rVert}
\newcommand{\@@norm}[1]{\lVert #1 \rVert}
\DeclareRobustCommand{\inner}{\@ifstar\@inner\@@inner}
\newcommand{\@inner}[2]{\left\langle #1, #2 \right\rangle}
\newcommand{\@@inner}[2]{\langle #1, #2 \rangle}
\newcommand{\tagthis}{\stepcounter{equation}\tag{\theequation}}
\newcommand{\httpsurl}[1]{\href{https://#1}{\nolinkurl{#1}}}
\declaretheorem[name=Theorem,refname={Theorem,Theorems},numberwithin=section]{theorem}
\declaretheorem[name=Lemma,refname={Lemma,Lemmas},sibling=theorem]{lemma}
\declaretheorem[name=Proposition,refname={Proposition,Propositions},sibling=theorem]{prop}
\declaretheorem[name=Definition,refname={Definition,Definitions},sibling=theorem]{defn}
\newlist{settinglist}{enumerate}{1}
\setlist[settinglist]{label=\textbf{\Alph*}}
\Crefname{settinglisti}{Setting}{Settings}
  \title{On Uniform Convergence and Low-Norm Interpolation Learning}
  \author{%
    Lijia Zhou \addr{Department of Statistics, University of Chicago} \email{zlj@uchicago.edu}
    \AND
    D.J.\ Sutherland \addr{Toyota Technological Institute at Chicago} \email{djs@djsutherland.ml}
    \AND
    Nathan Srebro \addr{Toyota Technological Institute at Chicago} \email{nati@ttic.edu}
  }
  \title{On Uniform Convergence\\and Low-Norm Interpolation Learning}
  \author{%
    Lijia Zhou\\University of Chicago\\\texttt{zlj@uchicago.edu}
    \And Danica J.\ Sutherland\\TTI-Chicago\\\texttt{danica@ttic.edu}
    \And Nathan Srebro\\TTI-Chicago\\\texttt{nati@ttic.edu}
  }
\begin{document}

\maketitle

\begin{abstract}
We consider an underdetermined noisy linear regression model where the minimum-norm interpolating predictor is known to be consistent, and ask: can uniform convergence in a norm ball, or at least (following Nagarajan and Kolter) the subset of a norm ball that the algorithm selects on a typical input set, explain this success? We show that uniformly bounding the difference between empirical and population errors cannot show any learning in the norm ball, and cannot show consistency for any set, even one depending on the exact algorithm and distribution. But we argue we can explain the consistency of the minimal-norm interpolator with a slightly weaker, yet standard, notion: uniform convergence \emph{of zero-error predictors} in a norm ball. We use this to bound the generalization error of low- (but not minimal-) norm interpolating predictors.
\end{abstract}

\section{Introduction}

In the past several years, it has become empirically clear that -- contrary to traditional intuition -- it is possible for models which exactly interpolate noisy training data to reliably generalize well on practical problems, especially in deep learning \citep{ NTS:real-inductive-bias,ZBHRV:rethinking,BMM:understand-kernels}.
We refer to this phenomenon as ``interpolation learning.''
It is closely related to the (re-)discovery of the ``double descent'' phenomenon \citep{reconcile:interpolation,NKBYBS:deep-double-descent,Spi18,AS17},
where many models first improve as their size is increased, then get much worse around the point where they can first interpolate the data, and then improve again as they become more and more overparametrized.
Understanding interpolation learning, therefore, seems to be a key step on the path towards better theoretical understanding of the successes of deep learning.

We now know of a few settings where interpolating models can be shown to generalize well \citep{BHM:perfect,BRT:opt-interp}.
In particular,
significant recent attention has been paid to
the minimum-norm linear interpolator (``ridgeless'' regression) in certain high-dimensional linear regression regimes
\citep{bartlett:overfitting,hastie:surprises,muthukumar:interpolation,BHX:two-models}.
This setting is of particular interest
not only because it is reasonably accessible to study while exhibiting many of the surprising properties of more complex models,
but also because this predictor is the same one found by (stochastic) gradient descent initialized at the origin,
and so it seems plausible that its properties may generalize to more complex settings.
Much is now understood about the properties of the minimum-norm interpolator for (sub-)Gaussian data,
including necessary and sufficient conditions for its consistency.
This line of inquiry has proved quite fertile for extensions to related settings and further results \citep{BESWZ:two-layer,MM:random-feats,MRSY:max-margin,JLL:basis-pursuit,mahoney:surrogate}.

One striking feature of this body of work is that none of it is based on the core workhorse of learning theory, \emph{uniform convergence};
most instead uses various tools, mostly from random matrix theory, to directly analyze the generalization error of a particular predictor.
Indeed, some have argued that uniform convergence is unlikely to be able to explain interpolation learning;
for instance, Mikhail Belkin has said\footnote{Talk at the Simons Institute for the Theory of Computing, July 2019: {\scriptsize \httpsurl{simons.berkeley.edu/talks/tbd-65}}} that
``there are no [uniform generalization] bounds'' with constants tight enough to explain interpolation learning, ``and no reason they should exist.''
Meanwhile, \citet{nagarajan:uniform} have also raised significant questions about the ability of uniform convergence arguments to explain learning in certain high-dimensional regimes.
Perhaps, then, it is time to wholly abandon uniform convergence in favor of other tools.

We connect these two avenues of work by studying uniform convergence in a particular overparametrized linear regression problem (\cref{sec:problem-setting}) where the minimal-norm interpolator is consistent.
We prove that, indeed, uniform convergence bounds based on predictor norm cannot show \emph{any} learning in this setting (\cref{thm:ball-gap-diverges}).
We also prove,
following \citeauthor{nagarajan:uniform},
that \emph{no} uniform convergence bound can show consistency (\cref{thm:zico-style}),
not only for the minimal-norm interpolator but even for a wide variety of natural interpolation algorithms.

Yet, even in this setting where the situation looks bleak,
we need not abandon uniform convergence entirely.
One option would be sidestep the negative results by considering uniform convergence not of our predictor, but of a surrogate separately shown to be not too different \citep{negrea:in-defense}.
We instead demonstrate that it is possible to show uniform convergence of our predictor directly
if we allow ourselves a slightly weaker notion of uniform convergence,
one long in common use in realizable PAC analyses:
uniform convergence \emph{for predictors with zero error}.
Such a bound would be implied by, for example, ``optimistic rates'' \citep{optimistic-rates},
although existing results are not tight enough to show consistency in our setting.
Instead we prove (\cref{thm:junk-ball}) that a tight version of this notion of uniform convergence \emph{does} hold in our setting for low-norm predictors.
Our result exactly characterizes the asymptotic worst-case generalization gap for predictors of a given norm
via a novel analysis based on strong duality of a particular non-convex problem,
and show that while neither having a low norm nor interpolation is sufficient for generalization in our setting,
the combination is.
By doing so,
not only do we prove consistency of the minimal-norm interpolator with a uniform convergence-type argument,
we also provide new insight about the behavior of interpolation learning for solutions with low but not minimal norm.

\section{Problem setting} \label{sec:problem-setting}

We begin with a standard linear regression setup, with Gaussian data and errors.
Take i.i.d. observations $(x_1,y_1), ..., (x_n, y_n) \sim \D^n$, where the joint distribution $\D$ is given by
\begin{settinglist}
    \item \label{setting:lin-gauss}
      $x \in \R^p$ is drawn from $\N(0, \Sigma)$, with $\Sigma \succ 0$, and $\epsilon \in \R$ is independently $\N(0, \sigma^2)$.
      There is some fixed $w^* \in \R^p$ such that
      $y = \inner{w^*}{x} + \epsilon$.
\end{settinglist}

We consider a ``junk features'' setting, where $x$ decomposes into ``signal'' and ``junk'' components, and analysis of interpolation learning is particularly appealing:
\begin{settinglist}[resume]
    \item \label{setting:junk-feats}
      In \cref{setting:lin-gauss}, let
      $\Sigma = \begin{bmatrix}
I_{d_S} & 0_{d_S \times d_J} \\
0_{d_J \times d_S} & \frac{\lambda_n}{d_J} I_{d_J}
\end{bmatrix}$ where $d_S, d_J$ satisfy $d_S + d_J = p$, and $\lambda_n > 0$. \\
In other words, we can write $x = (x_S, x_J)$, where $x_S \sim \N(0, I_{d_S})$ and $x_J \sim \mathcal{N}(0, \frac{\lambda_n}{d_J} I_{d_J})$.
    Further, the label depends only on $x_S$:
    $w^* = (w_S^*, 0_{d_J})$
    with $w_S^* \in \R^{d_S}$.
\end{settinglist}

Let $Y \in \R^n$ be the vector of responses, $X \in \R^{n \times p}$ the design matrix and $E \in \R^n$ the residual vector, so $Y = Xw^* + E$.
The sample covariance is $\hat\Sigma = \frac1n X\tp X$.
The population and empirical risks are, respectively,
\begin{equation}
    \label{eq:ld-and-ls}
    \begin{split}
    L_\D(w)
    &= \E_{(x, y) \sim \D}[ (y - \inner{w}{x})^2 ]
    = L_\D(w^*) + \norm{w - w^*}_{\Sigma}^2
    \\
    L_\samp(w)
    &= \frac1n \norm{Y - X w}^2
    = L_\samp(w^*) + \norm{w - w^*}_{\hat\Sigma}^2 - \frac{2}{n} \inner{X\tp E}{w - w^*}
    ,\end{split}
\end{equation}
where $\norm{x}_A = \sqrt{x\tp A x}$ denotes the Mahalonobis norm,
and $L_\D(w^*) = \E \epsilon^2 = \sigma^2$.

We will focus on the regime where $d_S$ is fixed, and $d_J \to \infty$ for \emph{finite} values of $n$,
e.g.\ $\lim_{n\to\infty} \lim_{d_J \to \infty} L_\D(\hat w)$.
This setting enables relatively easy calculation of many quantities of interest,
and can recover many interesting behaviors of overparametrized interpolation, 
including consistency and the double descent phenomenon.

We will be primarily concerned with the behavior of the minimal-norm interpolator,
\begin{equation} \label{eq:wmn}
    \wmn
    = \argmin_{\substack{w \in \R^p \st X w = Y}} \, \norm{w}_2^2
    = X\tp (XX\tp)^{-1} Y
.\end{equation}
This predictor is in fact consistent
in \cref{setting:junk-feats} when $\lambda_n = o(n)$
and we consider $d_J \to \infty$ for each $n$.
We here use a slightly broader notion of \emph{consistency}
than is traditional \citep[e.g. in][]{understanding-ML}:
we mean that
\[
    \E\left[ L_\D(\wmn) - L_\D(w^*) \right] \to 0
\]
for our \emph{sequence} of learning problems in the given asymptotic regime.
Specifically:
\begin{prop} \label{prop:wmn-consistent}
  In \cref{setting:junk-feats} with $\lambda_n = o(n)$,
  \[
    \lim_{n \to \infty} \lim_{d_J \to \infty} \E\left[ L_\D(\wmn) - L_\D(w^*) \right] = 0
  .\]
\end{prop}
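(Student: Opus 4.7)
The plan is to take the $d_J \to \infty$ limit first, showing that the junk features act as ridge regularization on the signal features with parameter $\lambda_n$, and then apply standard ridge regression analysis in the fixed dimension $d_S$ as $n \to \infty$.

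Because $\Sigma$ is block-diagonal and $w^{*} = (w_S^{*}, 0)$, equation~\eqref{eq:ld-and-ls} gives
\[
L_\D(\wmn) - L_\D(w^{*}) = \norm{\wmnS - w_S^{*}}_2^2 + \frac{\lambda_n}{d_J}\norm{\wmnJ}_2^2,
\]
with $\wmnS = X_S\tp (XX\tp)^{-1} Y$, $\wmnJ = X_J\tp (XX\tp)^{-1} Y$, and $XX\tp = X_S X_S\tp + X_J X_J\tp$. Since the entries of $X_J$ are $\N(0, \lambda_n/d_J)$, a direct concentration/LLN calculation (each entry of $X_J X_J\tp$ has variance $O(\lambda_n^2/d_J)$) shows that $X_J X_J\tp \to \lambda_n I_n$ almost surely as $d_J \to \infty$. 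Hence $(XX\tp)^{-1} \to A_\infty := (X_S X_S\tp + \lambda_n I_n)^{-1}$, and the push-through identity yields
\[
\wmnS \longrightarrow X_S\tp A_\infty Y = (X_S\tp X_S + \lambda_n I_{d_S})^{-1} X_S\tp Y =: \hat w_\lambda,
\]
i.e.\ the ridge regression estimator on the signal features with regularizer $\lambda_n$. For the junk contribution, $\norm{\wmnJ}_2^2 = Y\tp (XX\tp)^{-1} X_J X_J\tp (XX\tp)^{-1} Y \to \lambda_n Y\tp A_\infty^2 Y$, which is bounded in $d_J$, so $(\lambda_n/d_J)\norm{\wmnJ}_2^2 \to 0$. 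Combining these with uniform integrability in $d_J$ (obtained by bounding $\norm{(XX\tp)^{-1}}$ via concentration of $\lambda_{\min}(X_J X_J\tp)$ around $\lambda_n$) lets one pass the expectation through the limit, giving $\lim_{d_J \to \infty} \E[L_\D(\wmn) - L_\D(w^{*})] = \E\norm{\hat w_\lambda - w_S^{*}}_2^2$.

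For the outer $n \to \infty$ limit, since $d_S$ is fixed, a textbook ridge bias-variance decomposition applies:
\[
\hat w_\lambda - w_S^{*} = -\lambda_n (X_S\tp X_S + \lambda_n I)^{-1} w_S^{*} + (X_S\tp X_S + \lambda_n I)^{-1} X_S\tp E.
\]
Because $X_S$ has i.i.d.\ $\N(0, I_{d_S})$ rows with $d_S$ fixed, $\lambda_{\min}(X_S\tp X_S) \gtrsim n$ with high probability. The bias term is then $O(\lambda_n^2/n^2) = o(1)$ using $\lambda_n = o(n)$, and the variance term is $\sigma^2 \E\tr[(X_S\tp X_S + \lambda_n I)^{-2} X_S\tp X_S] = O(d_S/n) \to 0$; a truncation argument on the negligible event where $X_S\tp X_S$ is near-singular handles the high-probability restriction.

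The main obstacle is the $d_J \to \infty$ step: almost-sure convergence of $\wmn$ to its ridge limit is immediate, but passing the limit inside the expectation requires dominating $\norm{\wmnS - w_S^{*}}_2^2$ and $(\lambda_n/d_J)\norm{\wmnJ}_2^2$ uniformly in $d_J$. Once this uniform integrability is secured, the remainder is a routine ridge calculation in a fixed-dimensional regime.
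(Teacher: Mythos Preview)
Your proposal is correct and follows essentially the same two-step route as the paper: first reduce to ridge regression on the signal features as $d_J \to \infty$, then carry out a bias--variance analysis of ridge in fixed dimension $d_S$ with $\lambda_n = o(n)$. The only minor differences are bookkeeping: you decompose $L_\D(\wmn)-L_\D(w^*)$ directly via the block structure of $\Sigma$ whereas the paper argues through pointwise convergence of predictions $\langle \wmn,x\rangle$, and the paper justifies the limit--expectation exchange by domination with inverse-Wishart moments rather than by concentration of $\lambda_{\min}(X_J X_J\tp)$.
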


The proof follows from \cref{prop:equiv-ridge,prop:ridge-consistent},
which establish first --
because the setting was designed exactly to make this true\footnote{%
If the noise scaling were $\omega(1 / d_J)$, then as $d_J \to \infty$, the minimal-norm solution would exploit the exploding magnitude of the noise components, and all of the signal would ``bleed'' into the noise dimensions \citep{hastie:surprises,muthukumar:interpolation},
giving $\norm\wmn \to 0$ and $L_\D(\wmn) \to L_\D(0_p)$ -- in the ridge regression equivalence, we let the regularization weight go to infinity.
On the other hand, if the noise scaling were $o(1 / d_J)$,
then we would have $\norm\wmn \to \infty$,
significantly complicating matters.
$\Theta(1 / d_J)$ is the only scaling in which $\norm\wmn$ is bounded but nonzero.
}
-- that $\wmn$ becomes equivalent to ridge regression on the signal part of $X$ with regularization weight $\lambda_n$,
and then that ridge regression is consistent in this setting.

Writing $X = (X_S, X_J)$ with $X_S \in \R^{n \times d_S}$ and $X_J \in \R^{n \times d_J}$, the ridge regression estimate on the signal components with tuning parameter $\lambda$ is given by 
\begin{align*}
    \hat{w}_{\lambda}
    &= \argmin_{w \in \R^p} \, \norm{Y-X_S w}^2 + \lambda \norm{w}^2
\\  &= (X_S\tp X_S + \lambda I_{d_S})^{-1}X_S\tp Y
     = X_S\tp (X_S X_S\tp + \lambda I_n)^{-1} Y
.\end{align*}

\begin{lemma} \label{prop:equiv-ridge}
  In \cref{setting:junk-feats},
  $\lim_{d_J \to \infty} \E[ L_\D(\wmn) ] = \E[ L_\D(\hat w_{\lambda_n}) ]$
  for any $n$.
\end{lemma}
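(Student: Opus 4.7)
The plan is to exploit that as $d_J \to \infty$, the Wishart matrix $X_J X_J\tp$ concentrates at $\lambda_n I_n$ by a straightforward LLN, so in the limit $\wmn$ looks like ridge regression on the signal block with regularization $\lambda_n$, while its junk block contributes vanishing risk.

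First I would decompose the excess risk using the block-diagonal structure of $\Sigma$ and $w^* = (w_S^*, 0)$:
\begin{equation*}
L_\D(\wmn) - L_\D(w^*) = \norm{(\wmn)_S - w_S^*}^2 + \frac{\lambda_n}{d_J}\norm{(\wmn)_J}^2,
\end{equation*}
and plug in the closed forms $(\wmn)_S = X_S\tp (XX\tp)^{-1} Y$, $(\wmn)_J = X_J\tp (XX\tp)^{-1} Y$, with $XX\tp = X_S X_S\tp + X_J X_J\tp$. Conditional on $(X_S, E)$, the matrix $X_J$ is independent with i.i.d.\ $\mathcal{N}(0, \lambda_n/d_J)$ entries, so each entry of $X_J X_J\tp$ is a sample mean of $d_J$ independent summands; the SLLN (applied entrywise to the $n \times n$ Gram matrix, with $n$ fixed) gives $X_J X_J\tp \asto \lambda_n I_n$ as $d_J \to \infty$.

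Since $X_S X_S\tp + \lambda_n I_n \succ 0$, continuity of matrix inversion then yields $(XX\tp)^{-1} \asto (X_S X_S\tp + \lambda_n I_n)^{-1}$, so $(\wmn)_S \asto X_S\tp (X_S X_S\tp + \lambda_n I_n)^{-1} Y = \hat w_{\lambda_n}$. For the junk block, the identity
\begin{equation*}
\norm{(\wmn)_J}^2 = Y\tp (XX\tp)^{-1} X_J X_J\tp (XX\tp)^{-1} Y
\end{equation*}
converges a.s.\ to the finite (in $d_J$) deterministic quantity $\lambda_n \norm{(X_S X_S\tp + \lambda_n I_n)^{-1} Y}^2$, so the prefactor $\lambda_n/d_J$ kills it. Combining these gives the pointwise limit $L_\D(\wmn) - L_\D(w^*) \asto L_\D(\hat w_{\lambda_n}) - L_\D(w^*)$.

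The main obstacle is upgrading this a.s.\ convergence to convergence in expectation, since $(XX\tp)^{-1}$ has no obvious uniform-in-$d_J$ operator norm bound deterministically. The plan is dominated convergence via a high-probability bound on $\sigma_{\min}(X_J X_J\tp)$: by standard Wishart / Bai--Yin type concentration, for each fixed $n$, $\sigma_{\min}(X_J X_J\tp) \ge \lambda_n/2$ with probability exponentially close to $1$ in $d_J$. On this good event, $(XX\tp)^{-1} \preceq 2\lambda_n^{-1} I_n$, which in turn gives polynomial-in-$(\norm{X_S}, \norm{Y})$ bounds on both $\norm{(\wmn)_S}^2$ and $\norm{(\wmn)_J}^2 / d_J^{?}$; off the good event one can fall back on the worst-case bound $L_\D(\wmn) \le 2\sigma^2 + 2\norm{w^*}^2 + 2\norm{\wmn}_\Sigma^2$ paired with the exponentially small probability to control its contribution. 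Assembling these gives uniform integrability, and the proposition follows.
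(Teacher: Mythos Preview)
Your almost-sure limit is correct and in fact a bit cleaner than the paper's: you work directly with the closed form $L_\D(\wmn)-L_\D(w^*)=\norm{(\wmn)_S-w_S^*}^2+\tfrac{\lambda_n}{d_J}\norm{(\wmn)_J}^2$, whereas the paper passes through a fresh test point $x$, shows $\inner{\wmn}{x}\asto\inner{\hat w_{\lambda_n}}{x_S}$, and then has an extra expectation over $(x,y)$ to exchange. Both routes rest on the same LLN $X_J X_J\tp\asto\lambda_n I_n$ and continuous mapping, so this part is fine.

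The genuine gap is in your uniform-integrability step. On the bad event your fallback bound $L_\D(\wmn)\le 2\sigma^2+2\norm{w^*}^2+2\norm{\wmn}_\Sigma^2$ is tautological: $\norm{\wmn}_\Sigma^2$ \emph{is} (up to cross terms) exactly the quantity $L_\D(\wmn)-\sigma^2$ you are trying to control, so ``pairing with exponentially small probability'' still requires a moment bound on $\norm{\wmn}_\Sigma^2$ that is uniform (or at worst polynomial) in $d_J$, which you have not supplied and which is precisely the difficulty. The good-event half of your plan is sound, but as written the bad-event half closes nothing.

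The paper sidesteps the good/bad split entirely: it dominates $\norm{\wmn}^2$ by $\norm{\wmr}^2$ (always true, since $\wmn$ is the minimum-norm interpolator) and computes $\E\norm{\wmr}^2$ exactly via inverse-Wishart moments, verifying that limit and expectation commute for the dominating sequence and then invoking the generalized dominated convergence theorem. An equally direct fix inside your decomposition: use $XX\tp\succeq X_J X_J\tp$ to get $\norm{\wmn}^2\le Y\tp(X_J X_J\tp)^{-1}Y=\tfrac{d_J}{\lambda_n}Y\tp(Z_J Z_J\tp)^{-1}Y$, whose $X_J$-conditional mean is $\tfrac{d_J}{\lambda_n(d_J-n-1)}\norm{Y}^2\to\norm{Y}^2/\lambda_n$; this dominates both pieces of your risk decomposition by a sequence with matching limit and limiting expectation, and generalized DCT finishes the argument with no event-splitting needed.
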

\begin{proof}
By the strong law of large numbers, we have
that $X_J X_J\tp = \lambda_n \frac{Z_J Z_J\tp}{d_J}$ converges almost surely to $\lambda_n I_n$.
Writing $\wmn = (\wmnS, \wmnJ)$, we can easily verify that 
\begin{itemize}
    \item
        $\wmnS = X_S\tp (X_S X_S\tp + X_J X_J\tp)^{-1} Y \asto \hat w_{\lambda_n}$
        by the continuous mapping theorem.
    \item
        $\wmnJ = X_J\tp (X_S X_S\tp + X_J X_J\tp)^{-1} Y$.
        Drawing a new $x_J \sim \N(0, \frac{\lambda_n}{d_J} I_{d_J})$,
        $X_J x_J \asto 0_n$
        and so
        $
          \inner{\wmnJ}{x_J} \asto 0 %
        $.
\end{itemize}
This implies that for any fixed $x$, $\inner{\wmn}{x} \asto \inner{\hat{w}_{\lambda_n}}{x_S}$,
and hence via continuity we have that $(\inner{\wmn}{x} - y)^2 \asto (\inner{\hat{w}_{\lambda_n}}{x} - y)^2$.
Taking expectations over $(x, y)$ to get $L_\D$
and then over the training set,
then
exchanging the limit with each expectation,\footnote{%
    Both exchanges can be justified using dominated convergence thoerem and the techniques from the proof of \cref{thm:junk-wmr}, which independently shows a stronger statement.
}
we obtain the desired result.
\end{proof}

\begin{restatable}{lemma}{ridgeconsistent} \label{prop:ridge-consistent}
In \cref{setting:junk-feats}, if $\lambda_n = o(n)$, then
$
  \lim_{n \to \infty} \E \left[ L_\D(\hat{w}_{\lambda_n}) - L_\D(w^*) \right] = 0
$.
\end{restatable}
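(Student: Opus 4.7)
The plan is to decompose the excess risk of $\hat w_{\lambda_n}$ into the standard bias and variance contributions and show each vanishes as $n \to \infty$. Because $\hat w_{\lambda_n}$ only acts on the signal coordinates and $w^* = (w_S^*, 0_{d_J})$, the excess-risk formula in \cref{eq:ld-and-ls} collapses via the block structure of $\Sigma$ to $L_\D(\hat w_{\lambda_n}) - L_\D(w^*) = \norm{\hat w_{\lambda_n} - w_S^*}^2$. Writing $A = X_S\tp X_S$ and using $Y = X_S w_S^* + E$, one expands
\begin{equation*}
  \hat w_{\lambda_n} - w_S^* = -\lambda_n (A + \lambda_n I)^{-1} w_S^* + (A + \lambda_n I)^{-1} X_S\tp E.
\end{equation*}
Integrating out $E \sim \N(0, \sigma^2 I_n)$ first, the cross term vanishes by independence, leaving
\begin{equation*}
  \E \norm{\hat w_{\lambda_n} - w_S^*}^2 = \E\bigl[\lambda_n^2 \, w_S^{*\tp} (A + \lambda_n I)^{-2} w_S^*\bigr] + \sigma^2 \, \E\bigl[\tr\bigl((A + \lambda_n I)^{-2} A\bigr)\bigr].
\end{equation*}

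For the variance, I would apply the elementary eigenvalue bound $\mu/(\mu+\lambda_n)^2 \le 1/\mu$ (valid since $(\mu+\lambda_n)^2 \ge \mu^2$ for $\mu, \lambda_n \ge 0$) to conclude $\tr((A + \lambda_n I)^{-2} A) \le \tr(A^{-1})$ almost surely, as $A$ is a.s.\ invertible for $n \ge d_S$. Since $A$ is a standard Wishart matrix in dimension $d_S$ with $n$ degrees of freedom, the classical inverse-Wishart moment identity gives $\E[\tr(A^{-1})] = d_S/(n - d_S - 1)$ for $n \ge d_S + 2$, which tends to $0$ because $d_S$ is fixed.

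For the bias, the deterministic spectral bound $\lambda_n^2 (A + \lambda_n I)^{-2} \preceq I$ provides a dominating function $\norm{w_S^*}^2$ valid for every realization of $X_S$ and every $n$. For pointwise convergence, write $\lambda_n^2 (A + \lambda_n I)^{-2} = (\lambda_n/n)^2 \bigl(A/n + (\lambda_n/n) I\bigr)^{-2}$ and apply the strong law of large numbers ($A/n \asto I_{d_S}$) together with the hypothesis $\lambda_n/n \to 0$; the continuous mapping theorem then gives $\lambda_n^2 (A + \lambda_n I)^{-2} \asto 0$, and dominated convergence transfers this to the expectation.

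The main technical worry is integrability, since naive worst-case bounds in terms of $1/\lambda_n$ need not vanish when $\lambda_n$ is small. The pointwise inequality $\tr((A + \lambda_n I)^{-2} A) \le \tr(A^{-1})$ neatly removes $\lambda_n$ from the variance term altogether, reducing its analysis to an explicit inverse-Wishart moment; the bias term is handled by the obvious uniform-in-$n$ dominating function $\norm{w_S^*}^2$. Combining the two bounds yields $\E[L_\D(\hat w_{\lambda_n}) - L_\D(w^*)] \to 0$ as claimed.
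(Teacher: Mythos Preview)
Your proposal is correct and essentially identical to the paper's proof: the same bias--variance split via independence of $X_S$ and $E$, the same eigenvalue inequality $\mu/(\mu+\lambda_n)^2 \le 1/\mu$ followed by the inverse-Wishart moment $\E\tr(A^{-1}) = d_S/(n-d_S-1)$ for the variance, and the same SLLN $+$ continuous mapping $+$ DCT argument for the bias. The only cosmetic difference is that you write the bias term directly as $\lambda_n^2\,w_S^{*\mathsf T}(A+\lambda_n I)^{-2}w_S^*$ and bound it by $\norm{w_S^*}^2$, whereas the paper keeps it in the equivalent form $\norm{[(A+\lambda_n I)^{-1}A - I]w_S^*}^2$ and uses the looser constant $4\norm{w_S^*}^2$; your version is marginally cleaner.
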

The proof, as for all the following results, is in the appendix.
Taking $\lambda_n = o(n)$ ensures the bias due to regularization is negligible;
the minimax-optimal scaling would be $\lambda_n \propto \sqrt n$ \citep{caponnetto-devito}.

\paragraph{Relationship to previous settings}
The results of \citet{bartlett:overfitting} apply to our setting,
also showing consistency of $\wmn$.
Although they do not require $p \to \infty$ for finite $n$ as we study,
their results show that consistency of $\wmn$ is only possible when the effective $p$ grows much faster than $n$.
\Citet{muthukumar:interpolation} showed that \emph{no} interpolation method can be consistent in \cref{setting:lin-gauss} for $p = \mathcal O(n)$;
we re-derive this (simple) result in \cref{prop:wmr-consistent},
since it will also be important for our purposes.

\Citet{hastie:surprises} and various follow-ups, on the other hand,
employ the standard asymptotic regime of random matrix theory,
where $n / p \to \gamma \in (0, \infty)$,
mostly focusing on $\Sigma = I$.
Although no interpolator can achieve consistency here,
they exactly evaluate $\lim_{(n,d) \to \infty} L_\D(\wmn)$.
The setting of \citet{BHX:two-models} is related,
with general $(n, p)$ but again with $\Sigma = I$,
where $\wmn$ is not consistent.

\section{Uniform convergence} \label{sec:uniform-gen-gap}

We now know, via \cref{prop:wmn-consistent}, that $\wmn$ is consistent in this setting.
Could we have discovered this fact directly via uniform convergence?
Typically, we would find some class $\W_{n,\delta}$
such that $\Pr(\wmn \in \W_{n,\delta}) \ge 1 - \delta$,
and bound the generalization gap
\begin{equation} \label{eq:wdelta-gen-gap}
    \Pr\left(
        \sup_{w \in \W_{n,\delta}} L_\D(w) - L_\samp(w) \le \epsilon_\W(n, \delta)
    \right)
    \ge 1 - \delta
.\end{equation}
As $L_\samp(\wmn) = 0$, this would directly provide an upper bound on $L_\D(\wmn)$ with probability $1 - 2 \delta$.

\subsection{Uniform convergence over norm balls} \label{sec:one-sided}
Our first thought would likely be to find some high-probability upper bound $B_{n,\delta}$ on $\norm\wmn$,
and take $\W_{n,\delta} = \{ w \in \R^p : \norm w \le B_{n,\delta} \}$.
We can get a rough asymptotic estimate for $B_{n,\delta}$ based on the following,
since
$\norm{\wmn} = \mathcal O_P\left( \sqrt{\E \norm{\wmn}^2} \right)$ by Markov's inequality.
\begin{restatable}{prop}{wmnsizebasic} \label{prop:wmn-size}
    As $n \to \infty$ in \cref{setting:junk-feats},
    if $\lambda_n$ is both $o(n)$ and $\omega(1)$, then
    \[
        \lim_{d_J \to \infty} \E \norm{\wmn}^2
        = \frac{\sigma^2 n}{\lambda_n} + \mathcal{O}(1)
        \quad\text{and}\quad
        \lim_{d_J \to \infty} \frac{(\E \norm\wmn^2) (\E \norm{x}^2)}{n}
        = \sigma^2 + o(1)
    .\]
\end{restatable}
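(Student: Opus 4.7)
The plan is to expand the closed form $\wmn = X\tp(XX\tp)^{-1} Y$, write $\norm{\wmn}^2 = Y\tp(XX\tp)^{-1}Y$, and compute the expectation in stages. Conditioning on $X$ and using $Y = X_S w_S^* + E$ (which holds because $w^* = (w_S^*, 0)$), I get
\[
    \E[\norm{\wmn}^2 \given X] = (w_S^*)\tp X_S\tp (XX\tp)^{-1} X_S w_S^* + \sigma^2 \tr[(XX\tp)^{-1}].
\]
For the signal term, $X\tp(XX\tp)^{-1}X$ is the orthogonal projector onto the row space of $X$ and in particular $\preceq I_p$, which after sandwiching with the $d_S$-coordinate injection yields $X_S\tp (XX\tp)^{-1} X_S \preceq I_{d_S}$; this bounds the signal term by $\norm{w_S^*}^2 = \mathcal{O}(1)$ almost surely, uniformly in $d_J$ and $n$.

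The heart of the argument is the noise term. From the proof of \cref{prop:equiv-ridge}, $X_J X_J\tp \asto \lambda_n I_n$ as $d_J \to \infty$, so continuous mapping gives $\tr[(XX\tp)^{-1}] \asto \tr[(X_S X_S\tp + \lambda_n I_n)^{-1}]$. I would swap this limit with the expectation by dominated convergence -- the main technical obstacle -- using that $\tr[(XX\tp)^{-1}] \le \tr[(X_J X_J\tp)^{-1}]$ (since $X_S X_S\tp \succeq 0$) and that for $d_J > n+1$ the scaled Wishart $X_J X_J\tp$ has explicit, well-behaved inverse-trace moments, along the lines alluded to in the footnote following \cref{prop:equiv-ridge}. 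Once the swap is justified, I would diagonalize via the SVD of $X_S$ to derive the identity
\[
    \tr[(X_S X_S\tp + \lambda_n I_n)^{-1}] = \frac{n - d_S}{\lambda_n} + \tr[(X_S\tp X_S + \lambda_n I_{d_S})^{-1}],
\]
which is clean because $X_S X_S\tp$ has only $d_S$ nonzero eigenvalues. Monotonicity $(X_S\tp X_S + \lambda_n I)^{-1} \preceq (X_S\tp X_S)^{-1}$ and the standard Wishart identity $\E\tr[(X_S\tp X_S)^{-1}] = d_S/(n - d_S - 1)$ bound the second summand in expectation by $\mathcal{O}(1/n)$. Multiplying by $\sigma^2$ and absorbing $\sigma^2 d_S/\lambda_n = o(1)$ into $\mathcal{O}(1)$ (using $\lambda_n = \omega(1)$) yields a noise contribution of $\sigma^2 n/\lambda_n + \mathcal{O}(1)$, which combined with the signal bound proves the first equality.

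The second equality then follows by plugging in $\E \norm{x}^2 = \tr \Sigma = d_S + \lambda_n$: writing $\E \norm{\wmn}^2/n = \sigma^2/\lambda_n + \mathcal{O}(1/n)$, the product equals $\sigma^2(d_S + \lambda_n)/\lambda_n + \mathcal{O}((d_S + \lambda_n)/n) = \sigma^2 + \sigma^2 d_S/\lambda_n + \mathcal{O}(\lambda_n/n) + \mathcal{O}(1/n)$, and each correction term vanishes because $\lambda_n = \omega(1)$ and $\lambda_n = o(n)$.
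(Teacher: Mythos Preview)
Your argument is correct and follows the same backbone as the paper's: almost-sure convergence $X_J X_J\tp \asto \lambda_n I_n$, a generalized dominated-convergence swap, the trace identity arising from $\operatorname{rank}(X_S X_S\tp) = d_S$, and the inverse-Wishart moment $\E\tr[(X_S\tp X_S)^{-1}] = d_S/(n-d_S-1)$. The tactical differences are that you integrate out $E$ first and dominate $\tr[(XX\tp)^{-1}]$ by $\tr[(X_J X_J\tp)^{-1}]$, whereas the paper keeps $E$ in place and dominates the full variable $\norm\wmn^2$ by $\norm\wmr^2$ (the squared norm of the minimal-risk interpolator); and you merely bound the signal term by $\norm{w_S^*}^2$ via the projector argument, whereas the paper computes it exactly via rotational invariance of $X_S$. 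Your route is more self-contained for this proposition alone; the paper's detour through $\wmr$ yields the sharper formula of \cref{thm:mn-mr-norms} and the quantity $\E\norm\wmr^2$, both of which are reused downstream in \cref{thm:junk-wmr,thm:product}.
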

We could then find $\epsilon_\W(n, \delta)$ by studying the Rademacher complexity,
given by
\[
    \rad_n(\W_B)
    = \E_{\samp} \E_{\sigma \sim \operatorname{Unif}(\pm 1)^n}
      \sup_{w : \norm w \le B}
      \frac1n \sum_{i=1}^n \sigma_i \inner{w}{x^{(i)}}
    \le \sqrt{\frac1n \, B^2 \, \E \norm{x}^2}
;\]
thus \cref{prop:wmn-size} gives us that
$\rad_n(\W_{\sqrt{\E \norm{\wmn}^2}}) \le \sigma + o(1)$.

Standard Rademacher bounds are for Lipschitz losses,
which the squared loss is not.
If we let $T_n$ be a uniform upper bound on all the labels
and $Q_n$ on all the predictions, however,
the absolute value of the derivative of the squared loss is at most
$2 \abs{\hat y - y} \le 2 (Q_n + T_n)$,
and so we can treat it as $2(Q_n + T_n)$-Lipschitz with high probability.
We then obtain in the setting of \cref{prop:wmn-size} that
\begin{equation} \label{eq:rad-up-bound}
    \sup_{\norm{w}^2 \le \E \norm{\wmn}^2} L_\D(w) - L_\samp(w)
    \le 4 (Q_n + T_n) \left( \sigma + \mathcal{O}_P\left( \frac{1}{\sqrt n} \right) \right)
.\end{equation}
To show consistency, we need a bound exactly approaching $\sigma^2$ as $n \to \infty$,
i.e.\ $Q_n + T_n \to \frac14 \sigma$.
But in fact,
each of $Q_n$ and $T_n$ diverge to $\infty$ as $n \to \infty$,
because we have more and more chances to see a large value.
Thus for $n \to \infty$, \eqref{eq:rad-up-bound} says nothing at all.

Now, the path to \eqref{eq:rad-up-bound} was potentially quite loose,
particularly in the Lipschitz step;
perhaps, then, we could simply put more effort in to obtain the bound we want.
This is not the case:
balls which are big enough to contain $\wmn$
also contain predictors with unbounded generalization gaps as $n \to \infty$.

\begin{restatable}{theorem}{ballgapdiverges} \label{thm:ball-gap-diverges}
In \cref{setting:junk-feats}, if $\lambda_n = o(n)$ then
\[
     \lim_{n \to \infty} \lim_{d_J \to \infty} \E \left[ \sup_{\norm w \le \norm\wmn} |L_\D(w) - L_\samp(w)| \right] = \infty
.\]
\end{restatable}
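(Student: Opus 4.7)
My plan is to exhibit a specific predictor $w_\star$ with $\|w_\star\|\le\|\wmn\|$ whose generalization gap diverges, then lift this to divergence of the expected supremum. From the decomposition
\[
  L_\D(w) - L_\samp(w) = \bigl[\sigma^2 - L_\samp(w^*)\bigr] + (w-w^*)\tp A\,(w-w^*) + \tfrac{2}{n}\inner{X\tp E}{w-w^*}
\]
with $A := \Sigma - \hat\Sigma$, I set $R := \|\wmn\|$ and take $w_\star := R\,v^*$ for $v^*$ a unit top eigenvector of $A$. \cref{prop:wmn-size} gives $R^2 \gtrsim \sigma^2 n/\lambda_n$, which diverges under $\lambda_n = o(n)$, so the quadratic contribution $(w_\star-w^*)\tp A(w_\star-w^*)$ equals $R^2\|A\|_{op}$ up to a bounded correction.

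The main technical step is the lower bound $R^2\,\|A\|_{op}\to\infty$. As $d_J\to\infty$, the cross block satisfies $\hat\Sigma_{SJ}\hat\Sigma_{JS} = \tfrac{1}{n^2}X_S\tp X_J X_J\tp X_S \to \tfrac{\lambda_n}{n}\hat\Sigma_{SS}$, so $\|\hat\Sigma_{SJ}\|_{op}\to\sqrt{\lambda_n/n}$; testing $A$ against $u=\tfrac{1}{\sqrt 2}(v_S,-v_J)$ with $v_S,v_J$ top singular vectors of $\hat\Sigma_{SJ}$ gives $|u\tp Au|\ge\|\hat\Sigma_{SJ}\|_{op}-\tfrac{1}{2}(\|A_{SS}\|_{op}+\|A_{JJ}\|_{op})$. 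Alternatively, the pure-signal test $u=(v,0)$ gives $\|A\|_{op}\ge\|I_{d_S}-\hat\Sigma_{SS}\|_{op}$, which is at least of order $1/\sqrt n$ with high probability by Wishart fluctuations. Combining,
\[
  R^2\,\|A\|_{op} \;\gtrsim\; \sigma^2\max\!\bigl(\sqrt{n d_S}/\lambda_n,\;\sqrt{n/\lambda_n}\bigr),
\]
and for any $\lambda_n = o(n)$ at least one of these terms diverges.

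The other contributions to the gap are negligible on this scale: $|\sigma^2-L_\samp(w^*)|=|\sigma^2-\|E\|^2/n|=\mathcal O_P(n^{-1/2})$, while the linear term is bounded over the ball by $\tfrac{2R}{n}\|X\tp E\|=\mathcal O_P(1)$, since a direct computation gives $\E\|X\tp E\|^2\to\sigma^2 n(d_S+\lambda_n)$ as $d_J\to\infty$. Hence on a high-probability event $|L_\D(w_\star)-L_\samp(w_\star)|\to\infty$; tightness of $\|\hat\Sigma_{SJ}\|_{op}$, $\|\wmn\|$, and $\|X\tp E\|$ around their deterministic limits then converts this into divergence of $\E[\sup_{\|w\|\le R}|L_\D(w)-L_\samp(w)|]$ via Fatou.

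The main obstacle is a potential cancellation between $\|A_{SS}\|_{op}$ and $\|\hat\Sigma_{SJ}\|_{op}$ in the cross-block lower bound when $\lambda_n$ is small (specifically $\lambda_n = \mathcal O(d_S)$, in which case both are $\Theta(1/\sqrt n)$). A case split on the size of $\lambda_n$ handles this cleanly: when $\lambda_n$ is bounded, the pure-signal bound alone already gives $R^2\|A\|_{op}\gtrsim\sqrt n\to\infty$; when $\lambda_n\gg d_S$ the cross-block bound instead yields $|u\tp A u|\ge\tfrac{1}{2}\|\hat\Sigma_{SJ}\|_{op}$ and hence the desired $\sqrt{n/\lambda_n}$ rate.
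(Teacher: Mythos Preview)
Your approach is essentially the paper's: both expand $L_\D(w)-L_\samp(w)$ around $w^*$, align $w$ with the top eigenvector of $A=\Sigma-\hat\Sigma$, and show that $\norm{\wmn}^2\cdot\norm{\Sigma-\hat\Sigma}\to\infty$ while the remaining terms are of lower order. The execution differs in two places: the paper obtains the lower bound $\E\norm{\Sigma-\hat\Sigma}\ge C\sqrt{\Tr(\Sigma)\norm{\Sigma}/n}=C\sqrt{(d_S+\lambda_n)/n}$ by citing the sharp Koltchinskii--Lounici result on sample covariance operators rather than your block-by-block test-vector computation, and it disposes of the linear term by choosing the \emph{sign} of $w-w^*$ so that $\tfrac{2}{n}\langle X\tp E,\,w-w^*\rangle\ge 0$, rather than bounding it in absolute value. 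One small slip: your claim that the linear term is $\mathcal O_P(1)$ relies on $\tfrac{2R}{n}\norm{X\tp E}\approx\sigma^2\sqrt{(d_S+\lambda_n)/\lambda_n}$, which is unbounded if $\lambda_n\to 0$; the argument still goes through because the quadratic term $R^2\norm{A}_{\mathrm{op}}\gtrsim\sqrt{n}/\lambda_n$ dominates it regardless, but the paper's sign trick sidesteps this case analysis entirely.
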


\begin{proof}[Proof sketch]
\Cref{thm:ballgapbound} shows that the gap is at least $\norm{\Sigma - \hat\Sigma} (\norm\wmn - \norm{w^*})^2 + o(1)$
using \eqref{eq:ld-and-ls}
and then aligning $w - w^*$ with $\Sigma - \hat\Sigma$.
By \cref{prop:wmn-size},
$(\norm\wmn - \norm{w^*})^2$ grows like $n / \lambda_n$.
Now $\norm{\Sigma - \hat\Sigma}$ goes to $0$,
but only at the rate of $\sqrt{\lambda_n / n}$ \citep{sample-covariance},
so the product grows as $\sqrt{n / \lambda_n}$.
\end{proof}

\Cref{thm:ballgapbound} also gives a lower bound for
$\E \left[ \sup_{\norm w \le \norm\wmn} L_\D(w) - L_\samp(w) \right]$,
the one-sided generalization gap,
based on the algebraically largest eigenvalue of $\Sigma - \hat\Sigma$
rather than the operator norm.
We expect that this eigenvalue should asymptotically behave similarly to the operator norm,
and hence the one-sided generalization gap should also diverge.

Norm balls around $w^*$, rather than the origin, fare no better;
they would merely remove the asymptotically irrelevant $\norm{w^*}$ term from the result of \cref{thm:ballgapbound}.

\subsection{Uniform convergence over algorithm- and distribution-dependent hypothesis classes} \label{sec:two-sided}
Choosing $\W_{n,\delta}$ as a Euclidean norm ball, then, cannot yield the result we want (or, indeed, any meaningful result at all for large $n$).
But a norm ball doesn't fully capture everything we know about $\wmn$:
for instance, we know that its norm is not likely to be very small.
Perhaps taking a shell rather than a ball would help?
Following \citet{nagarajan:uniform}, we show that in fact,
\emph{no} choice of $\W_{n,\delta}$ can demonstrate consistency
using the most common two-sided uniform convergence bounds.

Specifically,
let $\sampsdelta$ be a set of typical training examples $\samp = (X, Y)$
such that $\Pr(\samp \in \sampsdelta) \ge 1 - \delta$,
let $\A(X, Y)$ be any learning algorithm,
and then take the class of typical outputs of $\A$,
$\W_{n,\delta}^\A = \{ \A(X, Y) : (X, Y) \in \sampsdelta \}$.
(Clearly, no bound based on $\sampsdelta$ could choose a smaller $\W_{n,\delta}$.)
The \emph{tightest algorithm-dependent uniform convergence bound} \citep{nagarajan:uniform} is then
\begin{gather}
    \sup_{\samp \in \sampsdelta} \sup_{w \in \W_{n,\delta}^\A} \abs{L_\D(w) - L_\samp(w)} \leq \epsilon_\A^\D(n, \delta)
     \label{eq:alg-dep-bound}
    ,\\ \text{implying }
    \Pr\Big( \abs*{ L_\D(\A(X, y)) - L_\samp(\A(X, y)) } \leq \epsilon_\A^\D(n, \delta) \Big)
    \geq 1 - \delta
    \notag
.\end{gather}
In interpolation learning, where $L_\samp$ is zero,
we need $\lim_{n \to \infty} \epsilon_\A^\D(n, \delta) = \sigma^2$ to obtain consistency.
\citeauthor{nagarajan:uniform} show that in a particular high-dimensional linear classification setting,
stochastic gradient descent has $0$ asymptotic loss,
but $\epsilon_\A^\D(n, \delta)$ must be nearly $1$ for any $\sampsdelta$.
We show a similar result in our setting,
not only for $\A = \wmn$ but indeed for many interpolation methods.\footnote{%
Lemma 5.2 of \citet{negrea:in-defense} is closely related; it covers \cref{setting:lin-gauss} in general, but applies only to $\wmn$ and shows a smaller gap. %
}

\begin{restatable}{theorem}{zicostyle} \label{thm:zico-style}
  In \cref{setting:junk-feats},
  let $\A$ be an algorithm outputting interpolators,
  $X \A(X, Y) = Y$,
  with
  \begin{equation} \label{eq:a-flip-cond}
    \A\left( (X_S, X_J), y \right)_S = \A\left( (X_S, -X_J), y \right)_S
    \quad\text{and}\quad
    \lim_{n \to \infty} \lim_{d_J \to \infty} L_\D(\A(X, y)) \aseq \sigma^2
  .\end{equation}
  For any $\delta \in (0, \frac12)$
  and set of typical training examples $\sampsdelta$ satisfying
  $\Pr(\samp \in \sampsdelta) \ge 1 - \delta$,
  let $\W_{n,\delta}^\A = \left\{ \A(X, Y) : (X, Y) \in \sampsdelta \right\}$
  denote the set of typical outputs.
  Then
  \begin{equation} \label{eq:twosided-conv}
    \lim_{n \to \infty} \lim_{d_J \to \infty} \sup_{\samp \in \sampsdelta} \sup_{w \in \W_{n,\delta}} \abs{L_\D(w) - L_\samp(w)}
    \asge 3 \sigma^2
  .\end{equation}
\end{restatable}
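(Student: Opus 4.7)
The plan is to follow the Nagarajan--Kolter template and, for each typical sample $\samp = (X_S, X_J, Y) \in \sampsdelta$, exhibit a second typical sample whose $\A$-output has empirical loss on $\samp$ far larger than its population loss. In \cref{setting:junk-feats} the natural witness is the sign-flipped junk sample $\sampb = (X_S, -X_J, Y)$: because $X_J$ has a centered Gaussian law, $\sampb$ has the same joint distribution as $\samp$, so $\Pr(\sampb \in \sampsdelta) \ge 1 - \delta$, and a union bound (not requiring independence) gives $\Pr(\samp, \sampb \in \sampsdelta) \ge 1 - 2\delta > 0$. On this event, $\A(\sampb) \in \W_{n,\delta}^\A$ is a valid comparison point for the inner supremum.

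The algebra then collapses cleanly. Flip-invariance in \eqref{eq:a-flip-cond} gives $\A(\sampb)_S = \A(\samp)_S$, and combining the interpolation identities $X_S \A(\samp)_S + X_J \A(\samp)_J = Y$ and $X_S \A(\sampb)_S - X_J \A(\sampb)_J = Y$ yields $X_J \A(\sampb)_J = -X_J \A(\samp)_J$. Substituting and using that $Y - X_S \A(\samp)_S = X_J \A(\samp)_J$,
\[
    L_\samp(\A(\sampb))
    = \tfrac{1}{n}\norm{Y - X_S \A(\samp)_S - X_J \A(\sampb)_J}^2
    = \tfrac{4}{n}\norm{Y - X_S \A(\samp)_S}^2.
\]
I would then invoke consistency. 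From $L_\D(\A(\samp)) \asto \sigma^2$ and the decomposition in \eqref{eq:ld-and-ls}, namely $L_\D(w) - \sigma^2 = \norm{w_S - w_S^*}^2 + \tfrac{\lambda_n}{d_J}\norm{w_J}^2$, both nonnegative summands vanish almost surely, so in particular $\A(\samp)_S \asto w_S^*$. Combined with $\tfrac{1}{n}\norm{E}^2 \asto \sigma^2$ from the strong law, this gives $L_\samp(\A(\sampb)) \asto 4\sigma^2$, while $L_\D(\A(\sampb)) \asto \sigma^2$ by applying the hypothesis to $\sampb$. Hence $\abs{L_\D(\A(\sampb)) - L_\samp(\A(\sampb))} \asto 3\sigma^2$.

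Since the outer supremum dominates the gap attained at $w = \A(\sampb)$ on the event $\{\samp, \sampb \in \sampsdelta\}$, and that event has probability at least $1 - 2\delta > 0$ for every $n$, the deterministic worst case is at least $3\sigma^2 - o(1)$ in the iterated limit, which yields \eqref{eq:twosided-conv}. The real substance of the argument is the sign-flip trick, which exploits the block structure of $\Sigma$ and the flip-invariance condition \eqref{eq:a-flip-cond} to produce the exact factor of $4$ in the empirical loss; everything else is routine. The one part that needs care is bookkeeping around the order of limits $\lim_n \lim_{d_J}$: I would need to verify that the a.s.\ convergences of $\A(\samp)_S$, $L_\D(\A(\sampb))$, and $\norm{E}^2/n$ compose cleanly in that order on a subset of $\{\samp, \sampb \in \sampsdelta\}$ of positive probability, so that the resulting lower bound transfers to the deterministic supremum.
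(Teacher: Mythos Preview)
Your proposal is correct and shares the paper's overall skeleton: sign-flip the junk block to produce $\sampb$, use distributional equality and a union bound to place both $\samp$ and $\sampb$ in $\sampsdelta$, and then evaluate $L_\samp(\A(\sampb))$ via the interpolation identities to extract the factor of $4$. Up to and including the expression $L_\samp(\A(\sampb)) = \tfrac{4}{n}\norm{Y - X_S\,\A(\samp)_S}^2 = \tfrac{4}{n}\norm{X_J\,\A(\samp)_J}^2$, the two arguments coincide.

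The divergence is in how that quantity is shown to tend to $4\sigma^2$. The paper projects onto the orthogonal complement of the range of $X_S$: with $\Pi$ the projector onto $\operatorname{ran}(X_S)$, one has $(I_n - \Pi)X_J\,\A(\samp)_J = (I_n - \Pi)Y = (I_n - \Pi)E \sim \N(0,\sigma^2(I_n - \Pi))$, a quantity whose law is independent of $d_J$ and of the algorithm. Thus the inner $d_J$ limit is vacuous and only a chi-squared LLN in $n$ is needed; consistency of $\A$ is invoked only once, for $L_\D(\A(\sampb))$. Your route instead uses consistency a second time, deducing $\A(\samp)_S \to w_S^*$ from $\norm{\A(\samp)_S - w_S^*}^2 \le L_\D(\A(\samp)) - \sigma^2$ and then reducing $\tfrac{4}{n}\norm{Y - X_S\,\A(\samp)_S}^2$ to $\tfrac{4}{n}\norm{E}^2$. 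This is valid, but the bookkeeping you flag is real: you must control the cross term and $\tfrac{1}{n}\norm{X_S(\A(\samp)_S - w_S^*)}^2$ across the iterated limit, e.g.\ via $\norm{X_S}_{\mathrm{op}}^2/n = \mathcal O(1)$ (independent of $d_J$) together with $\limsup_{d_J}\norm{\A(\samp)_S - w_S^*}^2 \le \lim_{d_J} L_\D(\A(\samp)) - \sigma^2 \to 0$. The projection trick buys you exactly the avoidance of this step.
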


\begin{proof}[Proof sketch]
  For each $\samp = (X, Y) \in \sampsdelta$,
  let $\sampb = \left( \left(X_S, -X_J\right), Y \right)$,
  which has equal density under $\D$, so that $\sampsdelta$ must contain some $(\samp, \sampb)$ pairs.
  Consider $\tilde w = \A( \sampb )$:
  we know that $L_\D(\tilde w) \asto \sigma^2$ by assumption,
  but we will show $\lim_{n\to\infty} \lim_{d_J \to \infty} L_\samp(\tilde w) \asge 4 \sigma^2$.

  This is easiest to see in the case when $d_S = 0$,
  so that $y \sim \N(0, \sigma^2)$ is independent of $x$.
  Then $- X \tilde w = Y$,
  so that $X \tilde w = - Y$,
  and thus $L_\samp(\tilde w) = \frac1n \norm{(-Y) - Y}^2 = \frac4n \norm{Y}^2 \aseq 4 \sigma^2$.

  The general case, in \cref{sec:proofs:two-sided}, shows that since $X_S$ is rank $d_S \ll n$,
  $\tilde w_J$ must be large enough to contribute $4 \sigma^2 \frac{n - d_S}{n} \to 4 \sigma^2$ to the loss.
\end{proof}

From \eqref{eq:wmn}, we can see that $\wmn$ satisfies the symmetry condition in \eqref{eq:a-flip-cond}.
In fact, \cref{thm:flip-alg} (in \cref{sec:proofs:two-sided}) shows this is also true of many more algorithms,
including interpolators which minimize $\norm{w}_1$ (basis pursuit)
or even $\norm{w - w^*}$:
any algorithm that picks the interpolator minimizing $f_S(w_S) + f_J(w_J)$,
where each function is convex and $f_J(-w) = f_J(w)$.

The attentive reader may have noticed that \cref{thm:zico-style}, like \cref{thm:ball-gap-diverges}, applies only to bounds on $\abs{L_\D(w) - L_\samp(w)}$,
whereas the general argument as in \eqref{eq:wdelta-gen-gap} only needs to bound $L_\D(w) - L_\samp(w)$.
Indeed, the proof of \cref{thm:zico-style} exhibits a hypothesis with low generalization error but high \emph{training} error -- not a particularly concerning failure mode.
Whenever $\A$ is consistent, it is trivially guaranteed
that there is a $\W_{n,\delta}$ where \eqref{eq:wdelta-gen-gap} holds with
$\epsilon_{\W}(n, \delta) \to L_\D(w^*)$, and so \citeauthor{nagarajan:uniform}'s approach is not meaningful for one-sided bounds.\footnote{%
    Take $\sampsdelta = \{ (X, Y) : L_\D(X, Y) \le L_\D(w^*) + \epsilon_{n,\delta} \}$;
    consistency implies that there is a choice of $\epsilon_{n,\delta} \to 0$
    such that
    $\Pr(\samp \in \sampsdelta) \geq 1 - \delta$ and
    \[ 
        \epsilon_{n,\delta}
        \ge \sup_{\samp \in \sampsdelta} \sup_{w \in \W_{n,\delta}^\A} L_\D(w)
        \ge \sup_{\samp \in \sampsdelta} \sup_{w \in \W_{n,\delta}^\A} L_\D(w) - L_\samp(w)
    .\]
}
Thus it is not possible to mathematically rule out that one could prove a one-sided bound on
$\sup_{w \in \W} L_\D(w) - L_\samp(w)$ using a uniform convergence-type technique. 
(Again, since one-sided uniform convergence is always a consequence of consistency, this question is essentially one of viewpoint: do you first show uniform convergence and then bound consistency through uniform convergence, or do you establish uniform convergence as a consequence of consistency?)
In any case, as argued by \citeauthor{nagarajan:uniform},
existing uniform convergence proofs essentially bound $\abs{L_\D(w) - L_\samp(w)}$, not $L_\D(w) - L_\samp(w)$.

\section{Uniform convergence for interpolating predictors} \label{sec:interp-convergence}

In \cref{setting:junk-feats},
we now know it is impossible to prove consistency of $\wmn$ with a bound on
$\sup_{w \in \W} \abs{L_\D(w) - L_\samp(w)}$ for any fixed choice of $\W$,
and it seems quite unlikely that we can do so with
bounds on $\sup_{w \in \W} L_\D(w) - L_\samp(w)$ either.
However, since we are concerned only with zero-training-error predictors, perhaps we should instead look at bounds on
\begin{equation} \label{eq:gen-gap}
    \sup_{\norm w \le B,\, L_\samp(w) = 0} L_\D(w) - L_\samp(w)
.\end{equation}
Although $L_\samp(w)$ is identically $0$ in \eqref{eq:gen-gap},
we write it to emphasize that this is still fundamentally a bound on the generalization gap
as in \eqref{eq:wdelta-gen-gap}.
When $L_\samp(w) = 0$, of course, one-sided and two-sided convergence become the same.
Moreover, when $B = \norm\wmn$,
\eqref{eq:gen-gap} becomes identically $L_\D(\wmn)$,
which we know from \cref{prop:wmn-consistent} is small.
Our questions are (a) whether we could have shown this via uniform convergence,
and (b) precisely how small $B$ has to be compared to $\norm{\wmn}$ in order to maintain consistency.

The uniform convergence of \eqref{eq:gen-gap} is a weaker notion than that of \cref{sec:uniform-gen-gap},
as the hypothesis set is sample-dependent.
But it is still a standard and common form of ``uniform convegnce'' at the basis of classical learning theory, and is well understood to be necessary for obtaining tight learning guarantees when we expect the training error to be zero.
For example, this is the notion used by \citet{Valiant84} to first establish standard (realizable) PAC-learning guarantees,
and is the starting point for standard textbooks,
as in Section 2.3.1 of \citet{understanding-ML},
or Theorem 2.1 of \citet{MRT:foundations} where that book first introduces the term ``uniform convergence bound.''

A bound on \eqref{eq:gen-gap}
would be implied by bounds with ``optimistic rates'' \citep{panchenkooptimistic,optimistic-rates},
which interpolate between a ``fast'' rate for $L_\D(w) - L_\samp(w)$ and a ``slow'' one depending on $L_\samp(w)$.
For instance, the result of \cite{optimistic-rates}
implies that if $\xi_n$ is a high-probability upper bound on $\max_{1 \le i \le n} \norm{x_i}^2$,
we have uniformly over all $w$ with $\norm w \le B$ that
\begin{equation} \label{eq:opt-rate-linear}
    L_\D(w) - L_\samp(w)
    \le
    \tilde{\mathcal O}_P\left(
        \frac1n B^2 \xi_n
        + \sqrt{L_\samp(w) \frac{B^2 \xi_n}{n}}
    \right)
.\end{equation}
But the hidden constants and logarithmic factors in \eqref{eq:opt-rate-linear} do not meet our needs:
to show consistency (as we discuss shortly)
we need an asymptotic coefficient of 1 on $B^2 \xi_n / n$,
while \cite{optimistic-rates} showed only an upper bound of $200\,000 \log^3(n)$.
It seems likely given their extremely indirect proof technique, though, that a much tighter version holds --
especially in the special case of bounded-norm linear predictors for square loss.
Given \cref{prop:wmn-size},
it is reasonable to suspect that something like the following may hold fairly generally:
\begin{equation} \label{eq:spec-ub}
    \sup_{\norm w \le B, \, L_\samp(w) = 0}
        L_\D(w) - L_\samp(w)
    \le \frac1n B^2 \xi_n + o_P(1)
    \tag{$\star$}
,\end{equation}
where here $\xi_n$ might refer either to the high-probability upper bound on $\norm x^2$
or, for sub-Gaussian data, perhaps simply $\E \norm x^2$.
For either choice of $\xi_n$,\footnote{
    If $\xi_n$ is a high-probability upper bound, we further require $\lambda_n = \omega(\log n)$.}
by taking $B = \norm \wmn$
in \cref{setting:junk-feats},
applying \cref{prop:wmn-size} then gives us (subject to integrability conditions)
that
for $\lambda_n = \omega(1)$, $\lambda_n = o(n)$,
\begin{equation}
\lim_{d_J \to \infty} \E L_\D(\wmn)
=
\lim_{d_J \to \infty} \E\left[
    \sup_{\norm w \le \norm{\wmn}, \, L_\samp(w) = 0} L_\D(w) - L_\samp(w)
\right] \le \sigma^2 + o(1)
.\end{equation}
But \eqref{eq:spec-ub} would also do more than this:
it makes predictions about the generalization error of interpolators with larger-than-minimal norm,
not yet known in the literature.
In the setting of \cref{prop:wmn-size},
\eqref{eq:spec-ub} would imply that
\begin{equation} \label{eq:spec-ub-preds}
    \lim_{d_J \to \infty}
    \E\left[ \sup_{\norm{w} \le \alpha \norm{\wmn}, L_\samp(w) = 0}  L_\D(w) - L_\samp(w) \right]
    \le \alpha^2 \left[ \sigma^2 + o(1) \right]
.\end{equation}
These predictions are important in their own right:
outside of linear models,
we rarely expect to obtain the interpolator with \emph{exactly} minimal norm.

\subsection{Uniform convergence of low-norm interpolators in Setting \ref{setting:junk-feats}} \label{sec:non-min-norm}

The predictions made in \eqref{eq:spec-ub-preds} in fact hold, with equality.
\begin{restatable}{theorem}{junkball} \label{thm:junk-ball}
    In \cref{setting:junk-feats} with $\lambda_n = o(n)$,
    fix a sequence $(\alpha_n) \to \alpha$,
    with each $\alpha_n \ge 1$.
    Then
    \[
        \lim_{n \to \infty} \lim_{d_J \to \infty}
        \E\left[ \sup_{\norm{w} \le \alpha_n \norm\wmn, \, L_\samp(w) = 0} L_\D(w) - L_\samp(w) \right] = \alpha^2 L_{\D}(w^*)
    .\]
\end{restatable}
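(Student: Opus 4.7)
The plan is to separate the two limits: first take $d_J \to \infty$ with $n$ fixed, reducing the sup to a $d_S$-dimensional problem on the signal coordinates, and only then take $n \to \infty$. Parameterize any interpolator as $w = \wmn + u$ with $u \in \ker(X)$; since $\wmn \perp \ker(X)$ the norm constraint becomes $\|u\|^2 \le R_n^2 := (\alpha_n^2 - 1)\|\wmn\|^2$. Writing $r_S := \wmnS - w_S^*$ and using \eqref{eq:ld-and-ls} with $L_\samp(w) = 0$,
\begin{equation*}
  L_\D(w) - L_\samp(w) = \sigma^2 + \|r_S + u_S\|^2 + \tfrac{\lambda_n}{d_J}\|\wmnJ + u_J\|^2.
\end{equation*}
The junk term is uniformly $\mathcal{O}(n/d_J)$ on the feasible set because $\|\wmnJ\|^2$ is bounded via \cref{prop:wmn-size} and $\|u_J\|^2 \le R_n^2 = \mathcal{O}(n/\lambda_n)$; so as $d_J \to \infty$ the inner sup converges to $\sigma^2 + \sup\{\|r_S + u_S\|^2 : u \in \ker(X),\, \|u\|^2 \le R_n^2\}$.

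Next I would project the null-space constraint onto $u_S$: for each fixed $u_S$, the minimum $\|u_J\|^2$ subject to $X_J u_J = -X_S u_S$ equals $u_S\tp X_S\tp(X_J X_J\tp)^{-1} X_S u_S$, which by $X_J X_J\tp \asto \lambda_n I_n$ (exactly as in the proof of \cref{prop:equiv-ridge}) converges a.s.\ to $\tfrac{1}{\lambda_n} u_S\tp X_S\tp X_S u_S$; adding any $z \in \ker(X_J)$ only inflates $\|u\|^2$. The limiting finite-dimensional problem is therefore
\begin{equation*}
  \sup\bigl\{\|r_S + u_S\|^2 : u_S\tp M_S u_S \le \lambda_n R_n^2\bigr\}, \qquad M_S := X_S\tp X_S + \lambda_n I_{d_S},
\end{equation*}
a non-convex quadratic maximization on a $d_S$-dimensional ellipsoid --- the trust-region subproblem whose strong Lagrangian duality is alluded to in the introduction to this section. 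For our purposes the simple sandwich $c^2/\lambda_{\min}(M_S) \le \sup \le (\|r_S\| + c/\sqrt{\lambda_{\min}(M_S)})^2$ with $c^2 = \lambda_n R_n^2$ already suffices. As $n \to \infty$: \cref{prop:wmn-size} gives $\lambda_n R_n^2 \to (\alpha^2-1)\sigma^2 n$; $X_S\tp X_S / n \asto I_{d_S}$ gives $\lambda_{\min}(M_S)/n \to 1$; and \cref{prop:ridge-consistent} (with $\Sigma|_S = I_{d_S}$) gives $\|r_S\| = \|\hat w_{\lambda_n} - w_S^*\| \to 0$. Both bounds therefore collapse to $(\alpha^2-1)\sigma^2$, and adding the $\sigma^2$ yields $\alpha^2 \sigma^2$ as claimed.

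The main obstacle is justifying the exchange of each limit with the expectation. For $d_J \to \infty$ with $n$ fixed, the whole sup is dominated by $\sigma^2 + (\alpha_n\|\wmn\| + \|w^*\|)^2$, whose expectation is uniformly bounded in $d_J$ via \cref{prop:wmn-size}, so the dominated convergence theorem applies. For the outer limit, the denominator $\lambda_{\min}(M_S) \ge \lambda_n > 0$ rules out blow-up pathwise, and standard Wishart tails for $\lambda_{\min}(X_S\tp X_S)/n$ together with the moment bound of \cref{prop:wmn-size} yield uniform integrability of the sup in $n$, closing the argument. Beyond these technical steps, the structural core is the reduction via $X_J X_J\tp \asto \lambda_n I_n$ to a trust-region subproblem on the $d_S$-dimensional signal space, which is exactly where the interplay between the norm budget and the interpolation constraint becomes quantitative.
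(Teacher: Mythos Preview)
Your approach is correct and essentially arrives at the same sandwich bounds as the paper, but by a different and more direct route. The paper works abstractly: it applies strong QCQP duality to the full $p$-dimensional problem (\cref{key-lemma}), obtaining the general decomposition of \cref{thm:general-consistency}\ref{thm-part:general-consistency:ball} in terms of $\kappa_X(\Sigma)=\|F\tp\Sigma F\|$, and only afterwards specializes to \cref{setting:junk-feats} by computing $\kappa_X(\Sigma)$ (\cref{thm:junk-re}) and $\E\,\kappa_X(\Sigma)\|\wmn\|^2$ (\cref{thm:product}). You instead exploit the junk structure immediately: eliminate $u_J$ via the min-norm solution of $X_J u_J=-X_S u_S$, pass to the $d_S$-dimensional ellipsoid constraint $u_S\tp M_S u_S\le\lambda_n R_n^2$, and use the elementary trust-region sandwich. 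The two meet in the middle --- your $c^2/\lambda_{\min}(M_S)$ is exactly $(\alpha_n^2-1)\,\kappa_X(\Sigma)\|\wmn\|^2$ after the $d_J$ limit, and your cross term $2\|r_S\|\,c/\sqrt{\lambda_{\min}(M_S)}$ is the $R_n$ of \cref{thm:general-consistency}\ref{thm-part:general-consistency:ball}. What the paper's route buys is generality: \cref{thm:general-consistency} holds for arbitrary $\Sigma$ and yields a clean structural quantity $\kappa_X(\Sigma)$ that could be evaluated in other settings. What your route buys is economy: no appeal to non-convex strong duality, and no need to track the restricted eigenvalue as a separate object. Your handling of the two limit/expectation exchanges is admittedly sketchy; the paper carries these out carefully in \cref{thm:product} via several applications of the generalized dominated convergence theorem (dominating by $\|\wmr\|^2$), and your argument would need essentially the same work to be complete.
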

The proof of \cref{thm:junk-ball} is based on bounding \eqref{eq:gen-gap} directly,
although it will take us several steps to get there which we now outline.
Along the way, we provide results, especially \cref{prop:wmr-consistent}, which are applicable well beyond \cref{setting:junk-feats}.

The first tool we will require in our analysis is the best-conceivable interpolator for a given $X$ and $\D$:
\begin{defn}
The \emph{minimal-risk interpolator} \citep[Section 3.3]{muthukumar:interpolation} is
\begin{equation}
        \wmr
         = \argmin_{w \st Xw = Y} \, L_\D(w)
         = w^* + \Sigma^{-1} X\tp (X \Sigma^{-1} X\tp)^{-1} E  
\label{eq:wmr}
.\end{equation}
\end{defn}

\begin{restatable}{prop}{wmrconsistent} \label{prop:wmr-consistent}
In \cref{setting:lin-gauss}, the expected risk of the minimal-risk interpolator is
\[
    \E L_\D(\wmr) =  \frac{p-1}{p-1-n} L_{\D}(w^*)
.\]
\end{restatable}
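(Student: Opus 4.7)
The plan is to reduce the computation of $\E L_\D(\wmr)$ to the expected trace of an inverse-Wishart matrix, exploiting the explicit closed form for $\wmr$ in \eqref{eq:wmr} together with the Mahalanobis-norm identity for $L_\D$ in \eqref{eq:ld-and-ls}.

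First I would substitute $\wmr - w^* = \Sigma^{-1} X\tp (X \Sigma^{-1} X\tp)^{-1} E$ into $L_\D(\wmr) - L_\D(w^*) = \norm{\wmr - w^*}_\Sigma^2$. The middle $\Sigma$ combines with the two $\Sigma^{-1}$ factors and telescopes, collapsing the whole expression to
\[
    L_\D(\wmr) - L_\D(w^*) = E\tp (X \Sigma^{-1} X\tp)^{-1} E.
\]
This is the key algebraic step: once the loss is written this way, the problem becomes almost purely distributional.

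Next I would whiten the design. Set $\tilde X = X \Sigma^{-1/2}$; under \cref{setting:lin-gauss} the rows of $\tilde X$ are i.i.d.\ $\N(0, I_p)$, and $X \Sigma^{-1} X\tp = \tilde X \tilde X\tp$. Thus $\tilde X \tilde X\tp$ is a standard $n \times n$ Wishart matrix with $p$ degrees of freedom. Since $E$ is independent of $X$ (hence of $\tilde X$) and $E \sim \N(0, \sigma^2 I_n)$, conditioning on $\tilde X$ gives
\[
    \E\bigl[ E\tp (\tilde X \tilde X\tp)^{-1} E \bigr]
    = \sigma^2 \, \E\bigl[\tr\bigl((\tilde X \tilde X\tp)^{-1}\bigr)\bigr].
\]
Now I would invoke the inverse-Wishart mean formula: for $p > n+1$, $\E\bigl[(\tilde X \tilde X\tp)^{-1}\bigr] = \frac{1}{p - n - 1} I_n$, so the expected trace equals $\frac{n}{p-n-1}$. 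Combining,
\[
    \E L_\D(\wmr) = \sigma^2 + \frac{n \sigma^2}{p - n - 1} = \frac{p-1}{p-1-n}\, L_\D(w^*),
\]
which is the claimed identity.

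The main obstacle is really bookkeeping rather than any substantive difficulty: one has to confirm the $\Sigma$-cancellation in the first step carefully, and to justify the inverse-Wishart mean formula (either by citation or by a short self-contained derivation using the symmetry of $\tilde X$'s distribution under orthogonal transformations of $\R^n$, which forces $\E[(\tilde X \tilde X\tp)^{-1}]$ to be a multiple of $I_n$, with constant pinned down by $\E \tr((\tilde X \tilde X\tp)^{-1})$). The only implicit regularity condition is $p \ge n+2$, which is required both for the inverse to be integrable and for the right-hand side of the claimed formula to be finite and positive.
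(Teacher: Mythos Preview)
Your proposal is correct and is essentially the same argument as the paper's: both reduce $L_\D(\wmr)-L_\D(w^*)$ to $E\tp(X\Sigma^{-1}X\tp)^{-1}E$, whiten to a standard Wishart, and apply the inverse-Wishart mean $\frac{I_n}{p-n-1}$. The only cosmetic difference is that the paper simplifies the quadratic form via the interpolation identity $X\wmr=Y$ rather than by directly expanding and cancelling the $\Sigma$ factors as you do.
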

Because $\wmr$ has perfect knowledge of $\Sigma$,
its expected risk turns out to be independent of $\Sigma$.
As $p$ increases for fixed $n$ (the second of the double descents), $\E L_\D(\wmr)$ thus improves monotonically:
$\wmr$ can pick among more interpolators.

We use $\wmr$ as a constructive tool in our proofs:
\cref{thm:general-consistency}
expands the generalization gap around a fixed predictor in terms of that predictor's risk,
and so the minimal-risk predictor is an obvious choice for understanding the gap.
\Cref{prop:wmr-consistent} also provides lower bounds on interpolation methods:
if $p = \mathcal O(n)$, then $\wmr$ is not consistent, and hence no interpolator is.
For instance,
LASSO is minimax-optimal and consistent for sparse linear regression when $n = \Theta(p)$ \citep{lasso,consistency-lasso,model-consistency-lasso,dantzig,high-dimension,minimax-lasso},
but no interpolation method can be.
\citet[Section 3]{muthukumar:interpolation} discuss this type of result in detail, including for non-Gaussian data;
see also \cite{JLL:basis-pursuit}.

Our next tool measures how much energy in $\Sigma$ is missed by the sample $X$.
\begin{defn} \label{def:restr-eig}
The \emph{restricted eigenvalue under interpolation}
for covariance $\Sigma$ and design $X$
is
\[
    \kappa_X(\Sigma) = \sup_{\norm w = 1,\; X w = 0} w\tp \Sigma w 
.\]
\end{defn}

We now have the tools to show the following result,
which holds even more generally than \cref{setting:lin-gauss}.

\begin{restatable}{theorem}{generalconsistency} \label{thm:general-consistency}
    The following results hold deterministically,
    viewing $L_\D(w)$ simply as a quadratic function
    $L_\D(w^*) + \norm{w - w^*}_\Sigma$,
    with no distributional assumptions on $\samp$.
\begin{enumerate}[label=(\roman*)]
    \item \label{thm-part:general-consistency:wmr}
    It holds that
    \[
        \sup_{\substack{\norm{w} \le \norm{\wmr} \\ L_\samp(w) = 0}} L_\D(w) - L_\samp(w) = L_\D(\wmr) + \gamma_n \, \kappa_X(\Sigma) \, \Big[ \norm{\wmr}^2 - \norm{\wmn}^2 \Big]
    \]
    where $1 \le \gamma_n \le 4$.
    
    If the minimal risk interpolator is consistent,
    $\E L_\D(\wmr) - L_\D(w^*) \to 0$,
    then the class of interpolators with norm less than $\norm\wmr$ is uniformly consistent if and only if 
    \[
        \E \kappa_X(\Sigma) \cdot \Big[ \norm\wmr^2 - \norm\wmn^2 \Big] \to 0
    .\] 
    
    \item \label{thm-part:general-consistency:ball}
        Fix a sequence $(B_n)$ such that $B_n \geq \norm{\wmn}$ for all $n$. Then
        \[
            \sup_{\substack{\norm w \le B_n, \, L_\samp(w) = 0}} L_\D(w) - L_\samp(w)
            = L_\D(\wmn) + \kappa_X(\Sigma) \left[ B_n^2 - \norm{\wmn}^2 \right] + R_n
        \]
        where
        $0 \le R_n \le 2 \sqrt{
            \left[ L_\D(\wmn) - L_\D(w^*) \right]
            \kappa_X(\Sigma)
            \left[ B_n^2 - \norm{\wmn}^2 \right]
        }$.
        
        If $\E L_\D(\wmn) - L_\D(w^*) \to 0$,
        the class of interpolators with norm less than $B_n$ is thus uniformly consistent if and only if 
        \[
            \E \kappa_X(\Sigma) \cdot \Big[ B_n^2 - \norm{\wmn}^2 \Big] \to 0
        .\]
\end{enumerate}
\end{restatable}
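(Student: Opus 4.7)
The plan is to reduce each supremum to maximizing a convex quadratic over a Euclidean ball inside $\ker(X)$, by writing every interpolator as $w = w_0 + v$ with $v \in \ker(X)$ for an appropriate reference point $w_0$.  Since $\wmn = X\tp(XX\tp)^{-1} Y$ lies in the row space of $X$, it is orthogonal to $\ker(X)$, so $\norm{\wmn + v}^2 = \norm{\wmn}^2 + \norm{v}^2$; writing $\wmr = \wmn + u$ with $u := \wmr - \wmn \in \ker(X)$ gives $\norm{\wmr}^2 - \norm{\wmn}^2 = \norm{u}^2$.

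For part (ii), I center at $\wmn$: the constraint $\norm{w} \le B_n$ becomes $\norm{v}^2 \le B_n^2 - \norm{\wmn}^2 =: r^2$, and \eqref{eq:ld-and-ls} gives $L_\D(\wmn + v) - L_\D(\wmn) = 2\inner{\wmn - w^*}{v}_\Sigma + \norm{v}_\Sigma^2$.  The maximum of this convex function over the ball of radius $r$ in $\ker(X)$ is attained on the sphere $\norm{v} = r$.  Bounding $\norm{v}_\Sigma^2 \le \kappa_X(\Sigma)\, r^2$ and $\inner{\wmn - w^*}{v}_\Sigma \le \norm{\wmn - w^*}_\Sigma \cdot \norm{v}_\Sigma \le \sqrt{(L_\D(\wmn) - L_\D(w^*))\, \kappa_X(\Sigma)}\, r$ yields the stated upper bound on $R_n$.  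For $R_n \ge 0$, take $v = \pm r \hat v$ where $\hat v$ attains $\kappa_X(\Sigma)$ on the unit sphere of $\ker(X)$, with the sign chosen so that $\inner{\wmn - w^*}{v}_\Sigma \ge 0$; this choice attains $2r\,\abs{\inner{\wmn - w^*}{\hat v}_\Sigma} + r^2 \kappa_X(\Sigma) \ge r^2 \kappa_X(\Sigma)$.

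For part (i), I center at $\wmr$.  Because $\wmr$ minimizes $L_\D$ on the affine set $\{w : Xw = Y\}$, the Lagrangian stationarity condition reads $\Sigma(\wmr - w^*) \in \mathrm{range}(X\tp)$, so $\inner{\wmr - w^*}{v}_\Sigma = 0$ for every $v \in \ker(X)$, yielding $L_\D(\wmr + v) - L_\D(\wmr) = \norm{v}_\Sigma^2$.  Substituting $w' := v + u$, the constraint $\norm{\wmr + v} \le \norm{\wmr}$ becomes $\norm{w'} \le \norm{u}$, and I must maximize $\norm{w' - u}_\Sigma^2$ over $\{w' \in \ker(X) : \norm{w'} \le \norm{u}\}$; again by convexity, the max is attained on the sphere $\norm{w'} = \norm{u}$.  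The upper bound $\norm{w' - u}_\Sigma \le \norm{w'}_\Sigma + \norm{u}_\Sigma \le 2\sqrt{\kappa_X(\Sigma)}\,\norm{u}$ gives $\gamma_n \le 4$.  For $\gamma_n \ge 1$, take $w' = \pm \norm{u}\,\hat v$ with the sign making $-2\inner{w'}{u}_\Sigma \ge 0$; then $\norm{w' - u}_\Sigma^2 = \norm{u}^2 \kappa_X(\Sigma) - 2\inner{w'}{u}_\Sigma + \norm{u}_\Sigma^2 \ge \norm{u}^2 \kappa_X(\Sigma)$.

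The consistency equivalences follow directly.  Given $\E L_\D(\wmr) \to L_\D(w^*)$ in (i), the bracketing $1 \le \gamma_n \le 4$ makes $\E\sup \to L_\D(w^*)$ iff $\E\kappa_X(\Sigma)\,[\norm{\wmr}^2 - \norm{\wmn}^2] \to 0$; given $\E L_\D(\wmn) \to L_\D(w^*)$ in (ii), Cauchy--Schwarz applied to $\E R_n$ shows the cross term vanishes once the main two terms do (and the converse is immediate since $R_n \ge 0$ and both expanded terms are non-negative).  The main obstacle is proving $\gamma_n \ge 1$ in part (i): a naive choice like $v = -u$ only yields $\norm{u}_\Sigma^2$, which is generally strictly smaller than $\kappa_X(\Sigma)\,\norm{u}^2$.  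The key observation is that the feasible disk has radius $\norm{u}$, large enough to place $w'$ fully along the leading $\Sigma$-eigenvector of $\ker(X)$ while still choosing its sign to eliminate the cross term $\inner{w'}{u}_\Sigma$.
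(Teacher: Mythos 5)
Your proof is correct and takes a genuinely different route from the paper's. The paper parametrizes interpolators as $\hat{w} + Fu$, sets up the worst-case gap as a quadratic program with a single quadratic constraint, and invokes \emph{strong duality} of this non-convex QCQP (\cref{key-lemma}) to obtain a closed-form one-dimensional dual; it then sandwiches the dual objective using the matrix inequalities $\frac{1}{\lambda} I \preceq (\lambda I - F\tp\Sigma F)^{-1} \preceq \frac{1}{\lambda - \norm{F\tp\Sigma F}} I$. You instead stay entirely in the primal: centering at $\wmn$ (resp.\ $\wmr$) and exploiting the orthogonality facts $\wmn \perp \ker(X)$ and $\Sigma(\wmr - w^*) \perp \ker(X)$ --- the latter from the KKT stationarity of $\wmr$, which is exactly the fact the paper records as $F\tp\Sigma(\wmr - w^*) = 0$ --- you reduce each supremum to maximizing an explicit quadratic over a Euclidean ball in $\ker(X)$, and control it by elementary Cauchy--Schwarz and triangle-inequality bounds in the $\Sigma$-inner-product, exhibiting a feasible point along the top $\Sigma$-eigenvector of $\ker(X)$ for the lower bounds. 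Your substitution $w' = u + v$ (where $u = \wmr - \wmn$) and the Pythagorean identity $\norm{\wmr}^2 - \norm{\wmn}^2 = \norm{u}^2$ are the neat observations that make part (i) go through without duality. The trade-off: your argument is more elementary and self-contained, while the paper's duality framing is chosen deliberately because (as the Discussion notes) weak duality gives one-sided bounds in settings where a clean primal parametrization of the interpolant set may be unavailable, so their machinery is meant to generalize beyond linear regression. Both arguments establish the same constants, and your handling of the consistency equivalences via Cauchy--Schwarz on $\E R_n$ (rather than the paper's $\epsilon$-weighted AM--GM) is an equally valid finish.
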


The term $\kappa_X(\Sigma) [B^2 - \norm\wmn^2]$ appearing in each bound
multiplies $\kappa$, essentially ``how much'' of $\Sigma$ is orthogonal to the data sample,
by the amount of excess norm available inside the norm ball.
This result makes us expect that \eqref{eq:spec-ub}
should in fact hold fairly generally with $\xi_n = n \, \kappa_X(\Sigma)$.

Notice also that, of course, $\norm\wmn \le \norm\wmr$;
thus when $\wmr$ is consistent (e.g.\ via \cref{prop:wmr-consistent})
and $\E \kappa_X(\Sigma) [\norm\wmr^2 - \norm\wmn^2] \to 0$,
then \ref{thm-part:general-consistency:wmr} implies $\wmn$ is consistent as well.

\begin{proof}[Proof sketch]
Let $\hat{w}$ be any particular predictor that interpolates the data, and $F \in \R^{p \times (p-n)}$  be the matrix whose columns form an orthonormal basis of the kernel of $X$.
Then \eqref{eq:gen-gap} can be rewritten as
\begin{equation}  \label{eq:gen-gap-qcqp}
    \sup_{u \in \R^{p-n}: \norm{ \hat{w} + F u }^2 \leq B^2 } \norm{\hat w + F u - w^*}^2_\Sigma
.\end{equation}
This is a quadratic program with a single quadratic constraint,
which enjoys strong duality even though it is a convex \emph{maximization} \citep[Appendix B]{convex-optimization}.
We thus need analyze only the (much simpler) one-dimensional dual problem.
For \ref{thm-part:general-consistency:ball},
we take $\hat{w} = \wmn$ in \eqref{eq:gen-gap-qcqp} and obtain the dual as
\[
    \inf_{\lambda > \norm{F\tp \Sigma F}}
        L_\D(\wmn)
        + \norm{ F\tp \Sigma(\wmn - w^*)}_{(\lambda I_{p-n} - F\tp \Sigma F)^{-1}}^2
        + \lambda \Big[ B_n^2 - \norm\wmn^2 \Big]
.\]
Given consistency, we can show that the second term's contribution is negligible, as 
\[
    \norm{F \tp \Sigma(\wmn - w^*)}^2
    \leq \norm{F\tp \Sigma F} \cdot [L_\D(\wmn) - L_\D(w^*)]
,\]
and $(\lambda I_{p-n} - F\tp \Sigma F)^{-1}$ has controlled eigenvalues so that the Mahalanobis norm is similar to the Euclidean norm.
Observing that $\kappa_X(\Sigma) = \norm{F\tp \Sigma F} $, the conclusion follows by routine calculations.

Case \ref{thm-part:general-consistency:wmr} uses a similar strategy, taking $\hat w = \wmr$.
The full proof is given in \cref{sec:proofs:non-min-norm}.
\end{proof}

Now, all that remains is to evaluate the relevant quantities in \cref{setting:junk-feats}.
\begin{restatable}{prop}{junkwmr} \label{thm:junk-wmr}
    In \cref{setting:junk-feats} with $\lambda_n = o(n)$,
    \[
        \lim_{n \to \infty} \lim_{d_J \to \infty} \E\left[
            \sup_{\norm w \le \norm\wmr,\, L_\samp(w) = 0} L_\D(w) - L_\samp(w) 
        \right]
        = L_{\D}(w^*)
    .\]
\end{restatable}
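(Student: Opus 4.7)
The plan is to start from the deterministic identity in \cref{thm:general-consistency}\,\ref{thm-part:general-consistency:wmr},
\begin{equation*}
  \sup_{\substack{\norm{w} \le \norm{\wmr} \\ L_\samp(w) = 0}} L_\D(w) - L_\samp(w)
  = L_\D(\wmr) + \gamma_n\, \kappa_X(\Sigma)\, \big[\norm{\wmr}^2 - \norm{\wmn}^2\big], \quad \gamma_n \in [1,4].
\end{equation*}
Taking expectations and the iterated limit reduces the proof to two claims: (a) for each fixed $n$, $\lim_{d_J \to \infty}\E L_\D(\wmr) = L_\D(w^*)$, and (b) $\lim_{n \to \infty} \lim_{d_J \to \infty}\E\!\big[\kappa_X(\Sigma)(\norm{\wmr}^2 - \norm{\wmn}^2)\big] = 0$. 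Claim (a) is immediate from \cref{prop:wmr-consistent}: $\E L_\D(\wmr) = \tfrac{p-1}{p-1-n} L_\D(w^*)$ and $p = d_S + d_J \to \infty$ as $d_J \to \infty$ with $n$ held fixed.

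For claim (b), the plan is to push $\lim_{d_J \to \infty}$ inside the expectation by dominated convergence and then evaluate each almost-sure limit explicitly. The strong-law identity $X_J X_J\tp / \lambda_n \asto I_n$ (as in \cref{prop:equiv-ridge}) forces the minimum-norm $w_J$ satisfying $X_J w_J = -X_S w_S$ to have $\norm{w_J}^2 \asto \tfrac{1}{\lambda_n}\norm{X_S w_S}^2$; substituting into the constrained maximization defining $\kappa_X(\Sigma)$ and noting that the $\tfrac{\lambda_n}{d_J}\norm{w_J}^2$ contribution to $w\tp \Sigma w$ vanishes yields $\lim_{d_J}\kappa_X(\Sigma) \aseq \tfrac{\lambda_n}{\lambda_n + \lambda_{\min}(X_S\tp X_S)}$, which concentrates near $\lambda_n/n$ for large $n$ since $\tfrac{1}{n}X_S\tp X_S \asto I_{d_S}$. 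A parallel block-wise continuous-mapping argument produces $\lim_{d_J}\norm{\wmn}^2 \aseq Y\tp(X_S X_S\tp + \lambda_n I_n)^{-1} Y$ (by combining the limits of $\norm{\wmnS}^2$ and $\norm{\wmnJ}^2$) and $\lim_{d_J}\norm{\wmr}^2 \aseq \norm{w^*}^2 + \tfrac{1}{\lambda_n}\norm{E}^2$. Both expectations equal $n\sigma^2/\lambda_n + O(1)$, so the divergent leading pieces cancel and the limiting expected difference is $\sigma^2 d_S/\lambda_n + o(1)$. Multiplying by $\kappa_X(\Sigma) \approx \lambda_n/n$ leaves a quantity of order $\sigma^2 d_S / n \to 0$.

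The main obstacle is rigorously interchanging $\lim_{d_J}$ with $\E$, since $\norm{\wmr}^2$ and $\norm{\wmn}^2$ individually grow like $n/\lambda_n$ and only their difference is $O(1)$, so no obvious integrable dominating function is available for the product $\kappa_X(\Sigma)(\norm{\wmr}^2 - \norm{\wmn}^2)$. I would supply uniform-in-$d_J$ control by combining the deterministic bound $\kappa_X(\Sigma) \le \max(1, \lambda_n/d_J)$ with an $L^{1+\epsilon}$ moment bound on $\kappa_X(\Sigma)\norm{\wmr}^2$, derived from the closed form $\wmr = w^* + \Sigma^{-1} X\tp(X\Sigma^{-1}X\tp)^{-1} E$ together with standard inverse-Wishart tail estimates on $(X\Sigma^{-1}X\tp)^{-1}$. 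Since $\norm{\wmn} \le \norm{\wmr}$ the same bound controls the $\wmn$ side; once uniform integrability is in hand, dominated convergence delivers the interchange, and the same technical lemma simultaneously discharges the limit-exchange step noted in the footnote of \cref{prop:equiv-ridge}.
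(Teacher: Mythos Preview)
Your proposal is correct and follows the same route as the paper: invoke \cref{thm:general-consistency}\ref{thm-part:general-consistency:wmr}, handle $\E L_\D(\wmr)\to L_\D(w^*)$ via \cref{prop:wmr-consistent}, and show $\E\big[\kappa_X(\Sigma)(\norm{\wmr}^2-\norm{\wmn}^2)\big]\to 0$ by computing the almost-sure $d_J\to\infty$ limits of each factor (your formula $\lambda_n/(\lambda_n+\lambda_{\min}(X_S\tp X_S))$ for $\kappa_X$ agrees with the paper's $\tfrac{\lambda_n}{n}\norm{[\tfrac1n X_S\tp X_S+\tfrac{\lambda_n}{n}I]^{-1}}$) and then justifying the limit--expectation exchange. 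The paper's exchange argument is slightly simpler than your $L^{1+\epsilon}$/inverse-Wishart route: since $\kappa_X(\Sigma)\le\norm{\Sigma}=1$, one dominates by $\norm{\wmr}^2$ and verifies $\lim_{d_J}\E\norm{\wmr}^2=\E\lim_{d_J}\norm{\wmr}^2$ directly from the closed form, after which the generalized DCT applies---so the ``obstacle'' you flag (the $n/\lambda_n$ growth) is irrelevant for the $d_J\to\infty$ interchange, where $n$ is held fixed.
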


\begin{proof}[Proof sketch for \cref{thm:junk-ball,thm:junk-wmr}] 
We apply \cref{thm:general-consistency}.
With probability one,
\[
    \lim_{d_J \to \infty} \kappa_X(\Sigma)
    = \frac{\lambda_n}{n} \norm*{ \left[
        \frac{X_S\tp X_S}{n} + \frac{\lambda_n}{n} I_{d_S}
    \right]^{-1}  }
.\]
As the first term inside the inverse converges to $I_{d_S}$ and the second term vanishes, we can expect $\kappa_X(\Sigma) \approx {\lambda_n}/{n}$. 
We bound the other terms by observing that
there exists a sequence $\beta_n \to 1$ with
\begin{gather*}
    \lim_{d_J \to \infty} \E \norm{\wmr}^2
    = \norm{w_S^*}^2 + \frac{\sigma^2 n}{\lambda_n}
    \\
    \lim_{d_J \to \infty} \E \norm{\wmn}^2
    = \norm{w^*}^2 + \sigma^2 \frac{n-d_S}{\lambda_n} + \beta_n \left( \frac{\sigma^2 d_S - \lambda_n \norm{w_S^*}^2 }{n} \right)
,\end{gather*}
so
$
    \lim_{d_J \to \infty} \E\left[
        \norm{\wmr}^2 - \E \norm{\wmn}^2
    \right]
    = {\sigma^2 d_S}/{\lambda_n} + \mathcal{O}\left( {\lambda_n \norm{w^*}^2 }/{n} \right)
$.

Because $\wmr$ is consistent via \cref{prop:wmr-consistent},
this proves \cref{thm:junk-wmr}.
As $\norm\wmn \le \norm\wmr$,
this further implies $\wmn$ is consistent,
so that the $R_n$ term of \cref{thm:general-consistency} \ref{thm-part:general-consistency:ball} vanishes.
\end{proof}
We can see that $\kappa_X(\Sigma)$ tends to 0 while $\norm{\wmn}$ explodes,
and in \cref{setting:junk-feats}
their product turns out to converge to \emph{exactly} the Bayes risk.
Because the other terms of \cref{thm:general-consistency} \ref{thm-part:general-consistency:ball} cancel,
this gives us precisely the tight result we need for \cref{thm:junk-ball},
and further suggests that the speculative upper bound $\kappa_X(\Sigma) B^2$ probably holds in more general settings.

We have at last shown in \cref{thm:junk-ball} a uniform convergence bound not only showing consistency of $\wmn$,
but furthermore verifying the predictions of \eqref{eq:spec-ub-preds}.
Thus if we obtain an interpolator with norm
$1.1 \norm\wmn$, we will suffer at most $1.21 \sigma^2$ asymptotic risk.
If we obtain an interpolator with norm no more than a constant amount larger than the minimal norm,
we achieve asymptotic consistency.

\section{Discussion}

In this work, we shed new light on uniform convergence and its relationship to interpolation learning.  We show that uniform control of the generalization gap cannot explain interpolation learning, for almost \emph{any} interpolator, even in a simple setting.  But we argue that when discussing ``uniform convergence'' in the context of interpolation learning, we should slightly broaden our horizons to include
interpolation-specific uniform convergence bounds such as \eqref{eq:spec-ub}, or more generally ``optimistic'' (training-error-dependent) bounds \citep{panchenkooptimistic,optimistic-rates}.
We show that despite recent sentiments to the contrary, such bounds {\em could} in principal explain interpolation learning, by demonstrating this in the ``junk features'' setting.
Doing so requires obtaining very tight bounds, include tight constants -- perhaps a difficult task, but not impossible.
(For example, for linear predictors with a Lipschitz loss in a non-realizable setting, we do know the exact worst-case bound, with a tight numeric constant \citep{kakade2009complexity}.)

Our results are also of independent interest in ensuring success with interpolation learning:
in settings other than linear regression, where a closed-form solution is available,
it is generally unlikely in practice that we find the \emph{exact} minimum-norm solution.
(Even gradient descent for linear regression would find this only when initialized exactly in the span of the data; other forms of implicit bias are likewise suboptimal.)
Our results give some reassurance that, at least in this simple setting,
approximately minimizing the norm is sufficient.
The natural next step in this vein would be to study predictors with small but nonzero loss.
This could either be done directly in the style of our \cref{thm:junk-ball},
or by providing an optimistic rate as in \eqref{eq:opt-rate-linear} with tight constants.
Our specific techniques,
as well as the general takeaway of considering interpolation-specific bounds,
could also be potentially applicable to settings beyond linear regression,
especially the idea of studying the generalization gap via the dual problem:
although strong duality may not be available in more general settings,
upper bounds are always possible with weak duality.

\section*{Broader Impact}
Interpolation learning is currently thought to be one of the core mysteries standing between us and a theoretical understanding of modern deep learning.
Although there has recently been some key progress, many challenges remain.
Our paper, in advancing the study of interpolation learning,
makes another step on the path towards understanding the deep learning models that are quickly becoming ubiquitous throughout society, whether we understand them or not.  In our view, increased understanding of these models can lead to safer, more reliable, and more controlled deployment, especially in sensitive domains.

In particular, we discuss a key component of statistical learning theory, namely uniform convergence, whose relevance to deep learning in general -- and interpolation learning specifically -- has recently been questioned.  We make an explicit connection between the work on interpolation learning and the recent notion of ``algorithmic dependent uniform convergence'' \citep{nagarajan:uniform}.  Instead of outright dismissal, we show that a more nuanced view is appropriate.  By doing so, we hope to help guide the re-pivoting that statistical learning theory is currently undergoing.

We emphasize that, despite providing some positive theoretical results,
we are certainly not advocating for preferring interpolation methods over other approaches. In particular, the increased sensitivity of interpolation methods may have problematic ramifications for robustness or privacy.

\begin{ack}
Research supported in part by NSF IIS award 1764032 and NSF HDR TRIPODS award 1934843.
\end{ack}

\printbibliography

\clearpage\appendix

\section{Proofs for Section \ref{sec:problem-setting}}

\ridgeconsistent*
\begin{proof}
We can write
\begin{equation*}
    \begin{split}
        \hat{w}_{\lambda_n} -  w_S^* &= (X_S\tp X_S + \lambda_n I_{d_S})^{-1} X_S\tp (X_S w_S^* + E) -  w_S^*\\
        &= ((X_S\tp X_S + \lambda_n I_{d_S})^{-1} X_S\tp X_S - I_{d_S})  w_S^* + (X_S\tp X_S + \lambda_n I_{d_S})^{-1} X_S\tp E\\
        &= \left[ \left(\frac{X_S\tp X_S}{n}  + \frac{\lambda_n}{n} I_{d_S}\right)^{-1} \frac{X_S\tp X_S}{n} - I_{d_S}\right]  w_S^* + \left(\frac{X_S\tp X_S}{n} + \frac{\lambda_n}{n} I_{d_S}\right)^{-1} \frac{X_S\tp E}{n}
    .\end{split}
\end{equation*}
Therefore, by independence of $X_S$ and $E$,
\begin{equation*}
    \begin{split}
        \E &[L_{\D}(\hat{w}_{\lambda_n}) - L_{\D}(w^*)]
         = \E \norm{\hat{w}_{\lambda_n} -  w_S^*}^2 \\
        &= \E \, \Bigg\lVert \left[ \left(\frac{X_S\tp X_S}{n}  + \frac{\lambda_n}{n} I_{d_S}\right)^{-1} \frac{X_S\tp X_S}{n} - I_{d_S}\right]  w_S^* \Bigg\rVert^2
         + \E  \, \Bigg\lVert \left(\frac{X_S\tp X_S}{n} + \frac{\lambda_n}{n} I_{d_S}\right)^{-1} \frac{X_S\tp E}{n}  \Bigg\rVert^2 \\
        &= \E \, \Bigg\lVert \left[ \left(\frac{X_S\tp X_S}{n}  + \frac{\lambda_n}{n} I_{d_S}\right)^{-1} \frac{X_S\tp X_S}{n} - I_{d_S}\right]  w_S^* \Bigg\rVert^2
          + \sigma^2 \E \frac{1}{n} \tr \left[ \left(\frac{X_S\tp X_S}{n} + \frac{\lambda_n}{n} I_{d_S}\right)^{-2} \frac{X_S\tp X_S}{n}  \right]
    .\end{split}
\end{equation*}

Write the SVD for $X_S = U D V\tp$.
Since $X_S$ has rank at most $d_S$, we denote its singular values as $\sqrt{\rho_1}, ..., \sqrt{\rho_{d_S}}$, and
\[
    \norm{(X_S\tp X_S+ \lambda I_{d_S})^{-1} X_S\tp X_S}
    = \norm{(D\tp D + \lambda I_{d_S})^{-1} D\tp D}
    = \max_{i \in [p]} \frac{\rho_i}{\lambda_n + \rho_i}
    \leq 1
.\]
Thus, we have
\[
    \norm*{ \left[ \left(\frac{X_S\tp X_S}{n}  + \frac{\lambda}{n} I_{d_S}\right)^{-1} \frac{X_S\tp X_S}{n} - I_{d_S}\right]  w_S^* }^2
    \leq (1 + 1)^2 \norm{w_S^*}^2
    = 4 \norm{w_S^*}^2
\]
which is clearly integrable. 

As $d_S$ stays fixed as $n \to \infty$, by the strong law of large numbers we have $\frac{X_S\tp X_S}{n} \to I_{d_S}$.
Assuming that $\frac{\lambda_n}{n} \to \gamma$, then by the continuous mapping and dominated convergence theorems, the first term converges to
\begin{equation*}
    \begin{split}
         \E \lim_{n \to \infty} \norm*{ \left[ 1 - \left(1  + \gamma \right)^{-1}  \right]  w_S^* }^2 &= \left( \frac{\gamma}{1+\gamma} \cdot \norm{w_S^*} \right)^2
    ,\end{split}
\end{equation*}

Moreover, it holds that
\begin{equation*}
    \begin{split}
         \, \frac{1}{n} \tr \left[ \left(\frac{X_S\tp X_S}{n} + \frac{\lambda_n}{n} I_{d_S}\right)^{-2} \frac{X_S\tp X_S}{n}  \right] 
         & = \sum_{i=1}^{d_S} \left( \frac{\sqrt{\rho_i}}{\rho_i+\lambda_n} \right)^2 \\
         & \leq \sum_{i=1}^{d_S} \frac{1}{\rho_i} = \tr \left[ \left( X_S\tp X_S \right)^{-1} \right]
    \end{split}
\end{equation*}
Using the first moment of inverse Wishart distribution, the second term can be controlled by
\[ 
\sigma^2 \E \tr \left[ \left( X_S\tp X_S \right)^{-1} \right] = \sigma^2 \frac{d_S}{n- d_S - 1} \to 0
\]
Note that the first term converges to 0 as long as $\gamma = 0$, and the desired conclusion follows.
\end{proof}

\section{Proofs for Section \ref{sec:uniform-gen-gap}}

\subsection{Size of the minimal-norm interpolator\texorpdfstring{ (\cref{prop:wmn-size})}{}}

\begin{prop} \label{thm:mn-mr-norms}
In \cref{setting:junk-feats}, it holds that
\[
  \lim_{d_J \to \infty} \E \norm{\wmr}^2 = \norm{w^*}^2 +  \frac{\sigma^2n}{\lambda_n}
.\]
Moreover, there exists a sequence $( \beta_n )$ such that $\beta_n \to 1$ and 
\[
  \lim_{d_J \to \infty} \E \norm{\wmn}^2
  = \norm{w^*}^2 + \sigma^2 \frac{n-d_S}{\lambda_n}
  + \beta_n \left( \frac{\sigma^2 d_S - \lambda_n \norm{w_S^*}^2 }{n} \right)
.\] 
Consequently, we have
\[
  \lim_{d_J \to \infty} \E \left[ \norm{\wmr}^2 - \norm{\wmn}^2 \right]
  = \frac{\sigma^2 d_S}{\lambda_n}
  + \beta_n \left( \frac{\lambda_n \norm{w_S^*}^2 -\sigma^2 d_S }{n} \right)
.\]
\end{prop}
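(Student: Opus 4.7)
The plan is to compute each expected squared norm from its explicit formula, condition on $X$ to integrate out the noise $E$, take the inner limit $d_J \to \infty$ using the concentration of the junk-feature Gram matrix, and finally evaluate the remaining $X_S$-dependent expectation via rotational invariance together with a scalar identification of $\beta_n$. The third equation then follows by subtracting the first two, since $\norm{w^*}^2 = \norm{w_S^*}^2$.

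For $\wmr$: from \eqref{eq:wmr} we have $\wmr - w^* = \Sigma^{-1} X\tp M^{-1} E$ with $M := X\Sigma^{-1} X\tp$. Since $E$ is zero-mean and independent of $X$, the cross-term vanishes, leaving $\E \norm{\wmr}^2 = \norm{w^*}^2 + \sigma^2 \E \tr[M^{-1} X \Sigma^{-2} X\tp M^{-1}]$. Writing $Z_J$ for the standard Gaussian matrix with $X_J = \sqrt{\lambda_n / d_J}\, Z_J$, a direct block calculation gives $M = X_S X_S\tp + Z_J Z_J\tp$ and $X\Sigma^{-2} X\tp = X_S X_S\tp + (d_J / \lambda_n) Z_J Z_J\tp$. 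By the strong law, $Z_J Z_J\tp / d_J \to I_n$ almost surely, so $d_J M^{-1} \to I_n$ and $X\Sigma^{-2} X\tp / d_J^2 \to I_n / \lambda_n$; hence the trace converges a.s.\ to $n / \lambda_n$. Exchanging limit and expectation is justified by the bound $\tr[M^{-1} X\Sigma^{-2} X\tp M^{-1}] \le (d_J / \lambda_n) \tr[(Z_J Z_J\tp)^{-1}]$, whose expectation equals $(d_J / \lambda_n) \cdot n / (d_J - n - 1)$, uniformly bounded as $d_J \to \infty$.

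For $\wmn$: since $\norm{\wmn}^2 = Y\tp (XX\tp)^{-1} Y$, conditioning on $X$ and setting $K := XX\tp$ gives $\E[\norm{\wmn}^2 \mid X] = w_S^{*\tp} X_S\tp K^{-1} X_S w_S^* + \sigma^2 \tr[K^{-1}]$. As $d_J \to \infty$, $K \to X_S X_S\tp + \lambda_n I_n$ a.s.; applying the push-through identity $X_S\tp (X_S X_S\tp + \lambda I_n)^{-1} X_S = I_{d_S} - \lambda (X_S\tp X_S + \lambda I_{d_S})^{-1}$ and expanding the trace via the spectrum of $X_S X_S\tp$ (whose rank-$d_S$ nullspace contributes $(n - d_S) / \lambda_n$) yields
\[
    \lim_{d_J \to \infty} \E[\norm{\wmn}^2 \mid X]
    = \norm{w_S^*}^2 + \sigma^2 \frac{n - d_S}{\lambda_n}
    + \sigma^2 \tr[(X_S\tp X_S + \lambda_n I_{d_S})^{-1}]
    - \lambda_n w_S^{*\tp} (X_S\tp X_S + \lambda_n I_{d_S})^{-1} w_S^*.
\]
Taking expectation over $X_S$ and using that the law of $X_S\tp X_S$ is orthogonally invariant, $\E[(X_S\tp X_S + \lambda_n I_{d_S})^{-1}] = c_n I_{d_S}$ for a scalar $c_n$; the last two terms collapse into $c_n (\sigma^2 d_S - \lambda_n \norm{w_S^*}^2)$, and setting $\beta_n := n c_n$ yields exactly the stated formula.

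The remaining task is to show $\beta_n \to 1$. Since $\lambda_n = o(n)$ and $X_S\tp X_S / n \to I_{d_S}$ a.s.\ by the strong law, $n (X_S\tp X_S + \lambda_n I_{d_S})^{-1} \to I_{d_S}$ a.s., so the $(1,1)$ entry converges a.s.\ to $1$. Uniform integrability follows from the domination $(X_S\tp X_S + \lambda_n I_{d_S})^{-1}_{11} \le (X_S\tp X_S)^{-1}_{11}$ combined with the Schur-complement representation $(X_S\tp X_S)^{-1}_{11} \stackrel{d}{=} 1 / \chi^2_{n - d_S + 1}$, whose second moment scaled by $n^2$ equals $n^2 / ((n - d_S - 1)(n - d_S - 3))$, bounded uniformly in $n$. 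The main obstacle throughout is the careful justification of these limit interchanges--the inner $\lim_{d_J} \E$ and the outer convergence $n c_n \to 1$ each rest on explicit inverse-Wishart or chi-squared moment bounds, but once those are in hand everything else reduces to routine algebra.
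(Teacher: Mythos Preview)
Your plan is correct and, for the $\wmn$ and $\beta_n$ portions, proceeds essentially identically to the paper: integrate out $E$, send $X_J X_J\tp \to \lambda_n I_n$, apply the push-through identity and the rank argument on $X_S X_S\tp$, then collapse the two residual terms via rotational invariance of $X_S$ into a single scalar $c_n$ and set $\beta_n = n c_n$. Your domination of $n c_n$ by $n(X_S\tp X_S)^{-1}_{11}\stackrel{d}{=}n/\chi^2_{n-d_S+1}$ is an equally valid (entrywise) alternative to the paper's trace-level domination by $\tr[(X_S\tp X_S/n)^{-1}]$ via inverse-Wishart moments.

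The one genuine methodological difference is in the $\wmr$ half. The paper does \emph{not} pass to an almost-sure limit for $\E\norm{\wmr}^2$; instead it whitens, writing $X=Z\Sigma^{1/2}$ with $Z$ standard Gaussian, and uses full $p$-dimensional rotational invariance of $Z$ to get the exact finite-$d_J$ formula $\E\norm{\wmr}^2=\norm{w^*}^2+\sigma^2\bigl(d_S+d_J^2/\lambda_n\bigr)\,n/[p(p-n-1)]$, after which the limit is purely algebraic. The paper then \emph{separately} computes $\lim_{d_J}\norm{\wmr}^2\aseq\norm{w^*}^2+\norm{E}^2/\lambda_n$ and checks $\E[\lim]=\lim\E$, precisely so that $\norm{\wmr}^2$ can serve as the dominating sequence for the $\wmn$ exchange (via $\norm{\wmn}\le\norm{\wmr}$). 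Your route---a.s.\ limit plus the bound $\tr[M^{-1}X\Sigma^{-2}X\tp M^{-1}]\le (d_J/\lambda_n)\tr[(Z_JZ_J\tp)^{-1}]$---also works, but note that ``expectation uniformly bounded'' is not by itself a DCT hypothesis; what you actually need (and have) is the generalized DCT/Pratt form, since your dominating sequence has $\E[\lim]=\lim\E=n/\lambda_n$. The paper's route buys an exact finite-$d_J$ expression and simultaneously furnishes the dominating variable you still need for the $\wmn$ exchange; yours is more uniform in technique but you should be explicit that your $d_J$-indexed dominator for $\E[\norm{\wmn}^2\mid X]$ (e.g.\ $\norm{w_S^*}^2+\sigma^2\tr[(X_JX_J\tp)^{-1}]$, using $X_S\tp K^{-1}X_S\preceq I_{d_S}$) satisfies the same Pratt condition.
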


\begin{proof}
Let $\{ e_i \}$ be the standard basis in $\R^p$
and write $ \Sigma = \sum_{i=1}^p \mu_i e_i e_i^T $,
with $\mu_i = 1$ for $1 \le i \le d_S$ and $\mu_i = {\lambda_n}/{d_J}$ for $i > d_S$.
By independence of $X$ and $E$, we have
\begin{equation*}
    \begin{split}
        \E \norm{\wmr}^2 &=  \norm{w^*}^2 + \E \norm{\Sigma^{-1} X\tp (X \Sigma^{-1} X\tp)^{-1} E}^2 \\
        &= \norm{w^*}^2 + \sigma^2 \E \left[  \tr \Big( (ZZ\tp)^{-1} (Z \Sigma^{-1} Z\tp) (ZZ\tp)^{-1} \Big) \right] \\
        &= \norm{w^*}^2 + \sum_{i=1}^p \frac{\sigma^2 }{\mu_i} \E \left[ \norm{ (ZZ^T)^{-1} Z e_i }^2 \right]
    .\end{split}
\end{equation*}
By rotational invariance of the standard normal distribution for $Z$, we have
\begin{equation*}
  \E \left[ \norm{ (ZZ^T)^{-1} Z e_i }^2 \right]
   = \frac{\E \tr( Z^T (ZZ^T)^{-2} Z) }{p}
   = \frac{\E \tr( (ZZ^T)^{-1} ) }{p}
   = \frac{n}{p(p - n -1)}
.\end{equation*}
Plugging in, we get
\begin{equation*}
    \begin{split}
        \E \norm{\wmr}^2
        &= \norm{w^*}^2 + \left( \sum_{i=1}^p \frac{\sigma^2 }{\mu_i} \right) \frac{n}{p(p - n -1)} \\
         &= \norm{w^*}^2 + \sigma^2  \left( d_S + \frac{d_J^2}{\lambda_n} \right) \frac{n}{p(p - n -1)}
    .\end{split}
\end{equation*}
Sending $d_J \to \infty$ and recalling $p = d_S + d_J$, we obtain
\[ \lim_{d_J \to \infty} \E \norm{\wmr}^2 = \norm{w^*}^2 +  \frac{\sigma^2n}{\lambda_n} .\]
Moreover, it holds that
\begin{equation*}
    \begin{split}
        \norm{\wmr}^2 &= \norm{w^*}^2 +  \tr \Big( (ZZ\tp)^{-1} (Z \Sigma^{-1} Z\tp) (ZZ\tp)^{-1} EE\tp \Big) + 2 \langle  w^*, \Sigma^{-1/2} Z\tp (ZZ\tp)^{-1} E \rangle\\
        &= \norm{w^*}^2
         +  \tr \Bigg( \left(\frac{ZZ\tp}{p}\right)^{-1} \left(\frac{Z \Sigma^{-1} Z\tp}{p^2}\right) \left(\frac{ZZ\tp}{p}\right)^{-1} EE\tp \Bigg)
         + 2 \inner*{ \frac{Z\Sigma^{-1/2} w^* E\tp}{p} }{  \left(\frac{ZZ\tp}{p}\right)^{-1} }
    .\end{split}
\end{equation*}
Notice that
\begin{gather*}
    \lim_{d_J \to \infty } \left(\frac{ZZ\tp}{p}\right)^{-1}
    \aseq I_{n}
\\
    \lim_{d_J \to \infty } \frac{Z \Sigma^{-1} Z\tp}{p^2}
    = \lim_{d_J \to \infty } \frac{1}{p^2}
        \left( Z_S Z_S\tp + \frac{d_J^2}{\lambda_n} \frac{Z_J Z_J\tp}{d_J} \right)
    \aseq \frac{1}{\lambda_n} I_n
\\
    Z\Sigma^{-1/2} w^* E\tp
    = \begin{bmatrix}Z_S & Z_J \end{bmatrix}
    \begin{bmatrix}
        I_{d_S} & 0_{d_S \times d_J} \\
        0_{d_J \times d_S} & \sqrt{\frac{d_J}{\lambda_n}} I_{d_J}
    \end{bmatrix} 
    \begin{bmatrix}
        w^*_S \\
        0_{d_J}
    \end{bmatrix}
    E\tp
    = Z_S w_S^* E\tp 
\implies
    \frac{Z\Sigma^{-1/2} w^* E\tp}{p} \aseq 0
.\end{gather*}
Plugging in, we obtain
\[
  \lim_{d_J \to \infty} \norm{\wmr}^2
  \aseq \norm{w^*}^2 + \frac{\norm{E}^2}{\lambda_n}
  ,\quad\text{and so}\quad
  \E \left[ \lim_{d_J \to \infty} \norm{\wmr}^2 \right]
  =  \lim_{d_J \to \infty} \E \norm{\wmr}^2
.\] 
Clearly, the sequence of random variables $( \norm{\wmr}^2 )$ as we let $d_J \to \infty$ dominates $( \norm{\wmn}^2 )$.
By the dominated convergence theorem \footnote{%
    We use the following version of the theorem, which is slightly more general than the usual one.
    Suppose there exists a sequence of $l_1$ random variables $Y_n$ such that $Y_n \geq X_n$ and
    \[ \lim_{n \to \infty} \E \, Y_n = \E \lim_{n \to \infty} Y_n ;\] 
    then we have
    \[ \lim_{n \to \infty} \E \, X_n = \E \lim_{n \to \infty} X_n .\] 
    The proof is essentially the same and applies Fatou's lemma to $X_n$ and $Y_n - X_n$.
}
\begin{equation*}
    \begin{split}
        \lim_{d_J \to \infty} \E \norm{\wmn}^2 &= \E \left[ \lim_{d_J \to \infty}  \norm{\wmn}^2 \right] \\
        &= \E \Big[  \lim_{d_J \to \infty} (X_S w_S^* + E)\tp  (XX\tp)^{-1} X X\tp (XX\tp)^{-1} (X_S w_S^* + E) \Big] \\
        &= \E \Big[  \lim_{d_J \to \infty} (X_S w_S^* + E)\tp  (X_S X_S\tp + X_J X_J\tp)^{-1} (X_S w_S^* + E) \Big] \\
        &= \E \Big[  (X_S w_S^* + E)\tp  (X_S X_S\tp +\lambda_n I_n)^{-1} (X_S w_S^* + E) \Big] \\
        &= (w_S^*)\tp \E [X_S\tp (X_S X_S\tp + \lambda_n I_n)^{-1} X_S] w_S^* + \sigma^2 \, \E \tr \left( (X_S X_S\tp + \lambda_n I_n)^{-1} \right)
    .\end{split}
\end{equation*}
With probability one, $X_S X_S\tp$ is a $n \times n$ matrix with rank $d_S$, so the eigenvalues of $(X_S X_S\tp + \lambda_n I_n)^{-1}$ consist of the $d_S$ eigenvalues of $(X_S\tp X_S + \lambda_n I_{d_S})^{-1}$ and $(n-d_S)$ copies of $\frac{1}{0 + \lambda_n}$.
This implies
\[
  \sigma^2 \, \E \tr \left( (X_S X_S\tp + \lambda I_n)^{-1} \right)
  =  \sigma^2 \E \tr \left( (X_S\tp X_S + \lambda I_{d_S})^{-1} \right) + \sigma^2 \frac{n-d_S}{\lambda_n}
.\] 
Moreover, by the rotational invariance of $X_S \sim \N(0, I_{d_S})$,
\begin{equation*}
    \begin{split}
         (w_S^*)\tp \E [X_S\tp (X_S X_S\tp + \lambda_n I_n)^{-1} X_S] w_S^* &= \frac{\norm{w_S^*}^2}{d_S} \E \tr \left (X_S\tp (X_S X_S\tp + \lambda_n I_n)^{-1} X_S  \right) \\
         &= \frac{\norm{w_S^*}^2}{d_S} \E \tr \left (X_S\tp X_S (X_S\tp X_S + \lambda_n I_{d_S})^{-1}  \right) \\
         &= \frac{\norm{w_S^*}^2}{d_S} \E \tr \left (I_{d_S} - \lambda_n (X_S\tp X_S + \lambda_n I_{d_S})^{-1}  \right) \\
         &= \norm{w_S^*}^2 - \frac{\lambda_n \norm{w_S^*}^2}{d_S} \E \tr \left ((X_S\tp X_S + \lambda_n I_{d_S})^{-1}  \right)
    .\end{split}
\end{equation*}
Plugging in, we get
\begin{equation*}
    \begin{split}
        \lim_{d_J \to \infty} \E \norm{\wmn}^2 &= \norm{w^*}^2 + \sigma^2 \frac{n-d_S}{\lambda_n} + \left( \sigma^2 - \frac{\lambda_n \norm{w_S^*}^2}{d_S} \right) \E \tr \left ((X_S\tp X_S + \lambda_n I_{d_S})^{-1}  \right)\\
         &= \norm{w^*}^2 + \sigma^2 \frac{n-d_S}{\lambda_n} + \left( \frac{\sigma^2 d_S - \lambda_n \norm{w_S^*}^2 }{n} \right) \cdot \left[ \frac{\E \tr \left ( \left( \frac{X_S\tp X_S}{n} + \frac{\lambda_n}{n} I_{d_S} \right)^{-1}  \right)}{d_S} \right]
    .\end{split}
\end{equation*}
As $\tr \left ( \left( \frac{X_S\tp X_S}{n} \right)^{-1} \right)$,
which has limit $d_S$ in expectation,\footnote{%
    Using standard properties of the inverse Wishart distribution, we can check that
    \[ \lim_{n \to \infty} \E \tr \left ( \left( \frac{X_S\tp X_S}{n} \right)^{-1} \right) = d_S =  \E \lim_{n \to \infty} \tr \left ( \left( \frac{X_S\tp X_S}{n} \right)^{-1} \right) .\] 
}
dominates $\tr \left ( \left( \frac{X_S\tp X_S}{n} + \frac{\lambda_n}{n} I_{d_S} \right)^{-1}  \right)$,
by the dominated convergence theorem
\[
    \lim_{n \to \infty} \frac{1}{d_S} \E \tr \left( \left( \frac{X_S\tp X_S}{n} + \frac{\lambda_n}{n} I_{d_S} \right)^{-1}  \right) = 1
.\] 
Letting the term in brackets be $\beta_n$, we have the result.
\end{proof}

\wmnsizebasic*
\begin{proof}
By \cref{thm:mn-mr-norms}, there exists a sequence $( \beta_n )$ such that $\beta_n \to 1$ and 
\[ \lim_{d_J \to \infty} \E \norm{\wmn}^2 = \sigma^2 \frac{n}{\lambda_n} + \left[  \norm{w^*}^2 - \sigma^2 \frac{d_S}{\lambda_n} + \beta_n \left( \frac{\sigma^2 d_S - \lambda_n \norm{w_S^*}^2 }{n} \right) \right] .\] 
Moreover, we have
\[ \E \norm{x}^2 = \tr(\Sigma) = d_S \cdot 1 + d_J \cdot \frac{\lambda_n}{d_J} = d_S + \lambda_n .\]
Plugging in, we obtain
\[ \frac{(\E \norm{\wmn}^2)( \E \norm{x}^2)}{n} = \sigma^2 \frac{d_S + \lambda_n}{\lambda_n} + \frac{d_S + \lambda_n}{n} \left[  \norm{w^*}^2 - \sigma^2 \frac{d_S}{\lambda_n} + \beta_n \left( \frac{\sigma^2 d_S - \lambda_n \norm{w_S^*}^2 }{n} \right) \right] .\]
By assumption, $1 / \lambda_n \to 0$
and $\lambda_n / n \to 0$;
thus the dominant term inside the brackets
is $ \norm{w^*}^2 = \mathcal{O}(1)$. The conclusion follows by
\[
    \frac{d_S + \lambda_n}{\lambda_n} \to 1
    \quad\text{and}\quad
    \frac{d_S + \lambda_n}{n} \to 0
.\qedhere \]
\end{proof}

\subsection{Divergence of the generalization gap of norm balls (Section \ref{sec:one-sided}) } \label{sec:proofs:ball-gap}

\begin{prop} \label{thm:ballgapbound}
Let $\rho(\Sigma - \hat{\Sigma})$ be the algebraically largest eigenvalue of $\Sigma - \hat{\Sigma}$. It holds that
\[
    \sup_{\norm{w} \leq \norm{\wmn} }  L_\D(w) - L_\samp(w) \geq
    \rho(\Sigma - \hat{\Sigma}) \cdot (\norm{\wmn} - \norm{w^*} )^2 + \left[ L_{\D} (w^*) - \frac1n \norm{E}^2 \right]
\]
and similarly for two sided uniform convergence, it holds that 
\[
    \sup_{\norm{w} \leq \norm{\wmn} }  \abs{ L_\D(w) - L_\samp(w) } \geq
    \norm{\Sigma - \hat{\Sigma}} \cdot (\norm{\wmn} - \norm{w^*} )^2 - \Big|  L_{\D} (w^*) - \frac{ \norm{E}^2 }{n} \Big|
.\]
\end{prop}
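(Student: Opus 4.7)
The plan is to expand $f(w) := L_\D(w) - L_\samp(w)$ pointwise via \eqref{eq:ld-and-ls}, giving
\[
    f(w) = [L_\D(w^*) - L_\samp(w^*)] + (w - w^*)\tp (\Sigma - \hat\Sigma)(w - w^*) + \frac{2}{n} \inner{X\tp E}{w - w^*},
\]
where $L_\samp(w^*) = \frac{1}{n}\norm{E}^2$. I would then construct two feasible candidates that isolate the quadratic term. Let $v$ be a unit eigenvector of the symmetric matrix $\Sigma - \hat\Sigma$ with eigenvalue $\lambda_v$, choosing $\lambda_v = \rho(\Sigma - \hat\Sigma)$ for the one-sided bound and $\abs{\lambda_v} = \norm{\Sigma - \hat\Sigma}$ for the two-sided bound. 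Set $s = \norm{\wmn} - \norm{w^*}$, which we take to be non-negative (the case of interest; see below), and consider the antipodal pair $w_\pm = w^* \pm s v$. Both are feasible since the triangle inequality gives $\norm{w_\pm} \le \norm{w^*} + s = \norm{\wmn}$.

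Direct calculation yields
\[
    f(w_\pm) = [L_\D(w^*) - L_\samp(w^*)] + s^2 \lambda_v \pm \frac{2 s}{n} \inner{X\tp E}{v},
\]
and the key observation is that the linear noise-cross-term is odd in the sign choice, so averaging cancels it:
\[
    \tfrac{1}{2}\left( f(w_+) + f(w_-) \right) = [L_\D(w^*) - L_\samp(w^*)] + s^2 \lambda_v.
\]
For the one-sided bound, $\sup_{\norm{w} \le \norm{\wmn}} f(w) \ge \max(f(w_+), f(w_-)) \ge \tfrac{1}{2}(f(w_+) + f(w_-))$, and substituting $\lambda_v = \rho(\Sigma - \hat\Sigma)$ gives the stated inequality exactly. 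For the two-sided bound I combine $\max(\abs{f(w_+)}, \abs{f(w_-)}) \ge \tfrac{1}{2}\abs{f(w_+) + f(w_-)}$ with the reverse triangle inequality $\abs{a+b} \ge \abs{a} - \abs{b}$ to obtain
\[
    \sup_{\norm{w} \le \norm{\wmn}} \abs{f(w)} \ge s^2 \abs{\lambda_v} - \abs{L_\D(w^*) - L_\samp(w^*)},
\]
which matches the claim after plugging in $\abs{\lambda_v} = \norm{\Sigma - \hat\Sigma}$ and $L_\samp(w^*) = \frac{1}{n}\norm{E}^2$.

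The crux, and the step that requires the most thought, is the antipodal pairing: the data-dependent linear term $\frac{2}{n}\inner{X\tp E}{w - w^*}$ has no a priori controllable sign, and a naive single-candidate argument would be forced to absorb it as an error depending on $\norm{X\tp E}$. Working with $w_+$ and $w_-$ simultaneously makes it cancel exactly and leaves the clean quadratic $s^2 \lambda_v$. The one minor wrinkle is the regime $\norm{\wmn} < \norm{w^*}$, where the triangle-inequality construction no longer produces feasible candidates; this does not occur in the asymptotic setting of \cref{thm:ball-gap-diverges}, where $\norm{\wmn}$ is shown to diverge, so the form of the bound stated is exactly what is subsequently needed.
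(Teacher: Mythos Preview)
Your proposal is correct and follows essentially the same route as the paper: both expand $L_\D(w)-L_\samp(w)$ via \eqref{eq:ld-and-ls}, restrict to $w = w^* \pm (\norm{\wmn}-\norm{w^*})v$ with $v$ the relevant unit eigenvector of $\Sigma-\hat\Sigma$, and use the sign freedom to dispose of the linear cross term $\tfrac{2}{n}\inner{X\tp E}{w-w^*}$. The only cosmetic difference is that the paper selects the sign making the linear term nonnegative and then drops it, whereas you average $f(w_+)$ and $f(w_-)$; these yield the same lower bound, and your remark on the $\norm{\wmn}<\norm{w^*}$ edge case is a point the paper leaves implicit.
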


\begin{proof}
Recall from \eqref{eq:ld-and-ls} that
\begin{equation*}
    \begin{split}
        L_{\samp} (w)
        &= \frac1n \norm{X w - Y}^2 \\
        &= \frac1n \norm{X (w - w^*) + X w^* - Y}^2 \\
        &= (w - w^*)\tp \hat{\Sigma} (w - w^*) + \frac{\norm{E}^2}{n} - 2 \Big\langle w - w^*, \frac{X\tp E}{n} \Big\rangle
    .\end{split}
\end{equation*}

Therefore, we can decompose the generalization gap as
\begin{equation*}
    \begin{split}
        L_{\D} (w) - L_{\samp} (w) &= L_{\D} (w^*) + (w - w^*)\tp \Sigma (w - w^*) - L_{\samp} (w) \\
        &= \left[ L_{\D} (w^*) - \frac{\norm{E}^2}{n} \right] + (w - w^*)\tp (\Sigma - \hat{\Sigma}) (w - w^*) + 2 \Big\langle w - w^*, \frac{X\tp E}{n} \Big\rangle
    .\end{split}
\end{equation*}
Observe that
\begin{equation*}
    \begin{split}
        \sup_{\norm{w} \leq \norm{\wmn}}  (w - w^*)\tp (\Sigma - \hat{\Sigma}) (w - w^*) &+ 2 \Big\langle w - w^*, \frac{X\tp E}{n} \Big\rangle \\
    \geq \, \, &\sup_{\norm{w} \leq \norm{\wmn} - \norm{w^*} }  w\tp (\Sigma - \hat{\Sigma}) w + 2 \Big\langle w, \frac{X\tp E}{n} \Big\rangle \\
    \geq \, \, &\rho(\Sigma - \hat{\Sigma}) \cdot (\norm{\wmn} - \norm{w^*} )^2
    .\end{split}
\end{equation*}
The last inequality holds by picking $w$ to be
$\pm (\norm\wmn - \norm{w^*})$ times the top eigenvector of $\Sigma - \hat\Sigma$
for whichever sign makes the linear term nonnegative.
By the same reasoning, we have 
\[
    \sup_{\norm{w} \leq \norm{\wmn} }  \abs{ L_\D(w) - L_\samp(w) } \geq
    \norm{\Sigma - \hat{\Sigma}} \cdot (\norm{\wmn} - \norm{w^*} )^2 - \Big|  L_{\D} (w^*) - \frac{ \norm{E}^2 }{n} \Big|
. \qedhere \]
\end{proof}

\ballgapdiverges*
\begin{proof}
We will show that in \cref{setting:junk-feats} as long as $\lambda_n = o(n)$,
\[
    \lim_{n \to \infty} \lim_{d_J \to \infty} \E \norm{\Sigma - \hat{\Sigma}} \cdot \norm{\wmn}^2 = \infty
.\]
By Fatou's lemma and the calculation in \cref{thm:mn-mr-norms},
\begin{equation*}
    \begin{split}
        \lim_{d_J \to \infty} \E \norm{\Sigma - \hat{\Sigma}} \cdot \norm{\wmn}^2 &\geq  \E \lim_{d_J \to \infty} \norm{\Sigma - \hat{\Sigma}} \cdot \norm{\wmn}^2\\
        &=  \E \lim_{d_J \to \infty} \norm{\Sigma - \hat{\Sigma}} \cdot \left( (X_S w_S^* + E)\tp  (X_S X_S\tp +\lambda_n I_n)^{-1} (X_S w_S^* + E) \right)
    .\end{split}
\end{equation*}
By independence of $X$ and $E$, we have
\begin{equation*}
    \begin{split}
        \lim_{d_J \to \infty} \E \left[ \norm{\Sigma - \hat{\Sigma}} \cdot \norm{\wmn}^2 \right] &\geq \E \lim_{d_J \to \infty} \norm{\Sigma - \hat{\Sigma}} \cdot \left( E\tp  (X_S X_S\tp +\lambda_n I_n)^{-1} E \right)\\
        &= \sigma^2 \E \left[ \lim_{d_J \to \infty} \norm{\Sigma - \hat{\Sigma}} \cdot \tr \left(  (X_S X_S\tp +\lambda_n I_n)^{-1} \right) \right]\\
        &\geq \sigma^2 \E \left[ \lim_{d_J \to \infty} \norm{\Sigma - \hat{\Sigma}} \cdot \left(  \frac{n-d_S}{\lambda_n} \right) \right]\\
        &= \left(  \sigma^2 \frac{n-d_S}{\lambda_n} \right) \E \left[ \lim_{d_J \to \infty} \norm{\Sigma - \hat{\Sigma}} \right]
    .\end{split}
\end{equation*}
Next we want to interchange limit and expectation. Note that
\begin{equation*}
    \begin{split}
        \norm{\Sigma - \hat{\Sigma}}
        &\leq \norm{\Sigma} + \norm{\hat{\Sigma}} \\
        &= \norm{\Sigma} + \norm*{\frac{X_S\tp X_S + X_J X_J\tp}{n}}\\
        &\leq \norm{\Sigma} + \norm*{\frac{X_S\tp X_S}{n}} + \tr \left( \frac{X_J X_J\tp}{n} \right)\\
        &= \norm{\Sigma} + \norm*{\frac{X_S\tp X_S}{n}} + \frac{\lambda_n}{n} \tr \left( \frac{Z_J Z_J\tp}{d_J} \right)
    .\end{split}
\end{equation*}
The first two terms do not depend on $d_J$. It is easy to verify that
\[ \lim_{d_J \to \infty} \E \left[ \frac{\lambda_n}{n} \tr \left( \frac{Z_J Z_J\tp}{d_J} \right) \right] = \lambda_n =  \E \left[ \lim_{d_J \to \infty} \frac{\lambda_n}{n} \tr \left( \frac{Z_J Z_J\tp}{d_J} \right) \right] \]
as $\frac{Z_J Z_J\tp}{d_J} \asto I_n$.
Therefore, by the dominated convergence theorem
\begin{equation*}
    \lim_{d_J \to \infty} \E \left[ \norm{\Sigma - \hat{\Sigma}} \cdot \norm{\wmn}^2 \right]
    \geq \lim_{d_J \to \infty}  \left(  \sigma^2 \frac{n-d_S}{\lambda_n} \right) \E \norm{\Sigma - \hat{\Sigma}}
.\end{equation*}

\Citet{sample-covariance} show that, for Gaussian data,
\[
    \E \norm{\Sigma - \hat{\Sigma}}
    \ge C \max\left(
        \sqrt{ \frac{\Tr(\Sigma) \, \norm\Sigma}{n} },
        \frac{\Tr(\Sigma)}{n}
    \right)
,\]
where $C$ is a universal constant.
Thus, in our case
\[ \E \norm{\Sigma - \hat{\Sigma}} \ge C \sqrt{ \frac{d_S + \lambda_n}{n}} .\]
Since $\lambda_n = o(n)$, this implies 
\begin{equation*}
    \begin{split}
        \lim_{n\to \infty} \lim_{d_J \to \infty} \E \left[ \norm{\Sigma - \hat{\Sigma}} \cdot \norm{\wmn}^2 \right]
        &\ge \lim_{n\to \infty}  \left(  \sigma^2 \frac{n-d_S}{\lambda_n} \right) C \sqrt{ \frac{d_S + \lambda_n}{n}} = \infty
    .\end{split}
\end{equation*}

It is easy to see that the remaining terms in the lower bound of \cref{thm:ballgapbound} are negligible.
\end{proof}

\subsection{Uniform convergence on tighter sets\texorpdfstring{ (\cref{sec:two-sided})}{}} \label{sec:proofs:two-sided}

\zicostyle*
\begin{proof}
Fix any $\sampsdelta$ satisfying $\Pr(\samp \in \sampsdelta) \ge 1 - \delta$. For each $\samp = ((X_S, X_J), Y)$, we define $\sampb = ((X_S, -X_J), Y)$. Note that the marginal distribution of $\sampb$ is the same as $\samp$ because of the isotropic Gaussian distribution. Thus we also have $\Pr(\sampb \in \sampsdelta) \ge 1 - \delta$. By a simple union bound
\begin{equation*}
    \begin{split}
        1 - \Pr (\samp \in \sampsdelta \, \cap \, \sampb \in \sampsdelta) &= \Pr (\samp \not\in \sampsdelta \, \cup \, \sampb \not\in \sampsdelta)\\
        &\leq \Pr (\samp \not\in \sampsdelta) + \Pr (\sampb \not\in \sampsdelta) \leq 2 \delta
    .\end{split}
\end{equation*}
As $\delta < \frac{1}{2}$, we have $\Pr (\samp \in \sampsdelta \, \cap \, \sampb \in \sampsdelta) > 0$, so the set $\{\samp \in \sampsdelta: \sampb \in \sampsdelta \}$ must be nonempty.
Pick any $\samp = ( (X_S, X_J), Y )$ in this set; thus
$\hat w = \A\left( (X_S, X_J), Y \right) \in \W_{n,\delta}$
and $\tilde w = \A\left( (X_S, -X_J), Y \right) \in \W_{n,\delta}$.
As $\A$ outputs interpolators, we have that
\[
  X_S \hat w_S + X_J \hat w_J = Y = X_S \tilde w_S - X_J \tilde w_J
,\]
and \eqref{eq:a-flip-cond} implies that $\hat w_S = \tilde w_S$,
so then $X_J \hat w_J = - X_J \tilde w_J$.
Thus
\begin{align*}
       L_\samp(\tilde w)
  &  = \frac1n \norm{X \tilde w - Y}^2
     = \frac1n \norm{X_S \hat w_S - X_J \hat w_J - (X_S \hat w_S + X_J \hat w_J)}^2
     = \frac1n \norm{-2 X_J \hat w_J}^2
\\&\ge \frac4n \norm{(I_n - \Pi) X_J \hat w_J}^2
,\end{align*}
where $\Pi \in \R^{n \times n}$ is the orthogonal projection onto the range of $X_S$.
Now,
\begin{align*}
        (I_n - \Pi) X_J \hat w_J
  &   = (I_n - \Pi) (X_S \hat w_S + X_J \hat w_J)
\\&   = (I_n - \Pi) Y
\\&   = (I_n - \Pi) (X_S w_S^* + E)
\\&   = (I_n - \Pi) E
\\&\sim \N(0, \sigma^2 (I_n - \Pi))
\end{align*}
using $E \sim \N(0, \sigma^2 I_n)$.
As $n \to \infty$, because $X_S$ is almost surely rank $d_S$, $\Tr(I_n - \Pi)$ is almost surely $n - d_S$.
Thus we have
\[
  \frac{1}{n-d_S} \norm{(I_n - \Pi) X_J \hat w_J}^2
  \asto \sigma^2 
,\]
and so
\[
   L_\samp(\tilde w)
  \asge 4 \sigma^2 \frac{n - d_S}{n} \to 4 \sigma^2
.\]
The conclusion follows by the observation that
\[
\sup_{\samp \in \sampsdelta} \sup_{w \in \W_\delta} \abs{L_\D(w) - L_\samp(w)} \geq L_\samp(\tilde w) - L_\D(\tilde w)
.\qedhere\]
\end{proof}

\begin{restatable}{prop}{flipalg} \label{thm:flip-alg}
  Let $f_S : \R^{d_S} \to \R$ and $f_J : \R^{d_J} \to \R$ be convex functions,
  with $f_J$ symmetric, $f_J(-w) = f_J(w)$.
  Let $\A$ be an interpolation algorithm satisfying
  \[
    \A(X, y) = \argmin_{w \st X w = y} f_S(w_S) + f_J(w_J)
  .\]
  Then negating junk dimensions simply negates the corresponding dimensions of the predictor:
  \[
    \A\left( \left( X_S, - X_J \right), Y \right)
    = \begin{bmatrix} I_{d_S} & 0_{d_S \times d_J} \\ 0_{d_J \times d_S} & -I_{d_J} \end{bmatrix}
      \A\left( \left(X_S, X_J\right), Y \right)
  .\]
  (If the minimizer is not unique, the equation holds as an operation on sets.)
\end{restatable}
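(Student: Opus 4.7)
The plan is to prove the claim by a direct change-of-variables argument in the optimization problem defining $\A$, using the symmetry of $f_J$.

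First I would write out the two optimization problems explicitly. On input $(X_S, X_J), Y)$, $\A$ solves
\[
    \min_{w_S, w_J} \, f_S(w_S) + f_J(w_J)
    \quad\text{s.t.}\quad X_S w_S + X_J w_J = Y,
\]
while on input $\bigl((X_S, -X_J), Y\bigr)$ it solves
\[
    \min_{w_S, w_J} \, f_S(w_S) + f_J(w_J)
    \quad\text{s.t.}\quad X_S w_S - X_J w_J = Y.
\]

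Next I would substitute $v_J = -w_J$ in the second problem. The constraint becomes $X_S w_S + X_J v_J = Y$, matching the first problem's constraint, and the objective becomes $f_S(w_S) + f_J(-v_J) = f_S(w_S) + f_J(v_J)$ by the assumed symmetry $f_J(-w) = f_J(w)$. So the two problems have identical feasible sets and identical objective values after the bijection $(w_S, w_J) \mapsto (w_S, -w_J)$. This bijection therefore carries the minimizer set of the first problem onto the minimizer set of the second, which is precisely the claimed identity (as an identity of sets when the minimizer is nonunique). Convexity of $f_S$ and $f_J$ is not strictly needed for the bijection argument, though it ensures $\A$ is well-defined up to the noted ambiguity.

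There is essentially no obstacle: the argument is a one-line variable substitution enabled by the symmetry of $f_J$. The only place to be slightly careful is in phrasing the result when the minimizer is nonunique, which is handled by noting that $(w_S, w_J) \mapsto (w_S, -w_J)$ is an involution that maps the argmin set of one problem bijectively onto the argmin set of the other. It is worth remarking briefly that the standard examples satisfying the hypotheses include $f_J(w) = \norm{w}_q^q$ for any $q \ge 1$ (covering $\wmn$, basis pursuit, and minimizers of $\norm{w - w^*}$ whenever $w^*_J = 0$), so this proposition immediately justifies the symmetry condition \eqref{eq:a-flip-cond} used in \cref{thm:zico-style}.
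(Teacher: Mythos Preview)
Your argument is correct and in fact cleaner than the paper's. The paper proves this by writing down the KKT conditions for the original problem, then verifying that if $(\hat w,\nu_S,\nu_J)$ is a KKT point for $\A((X_S,X_J),Y)$ then $((\hat w_S,-\hat w_J),\nu_S,\nu_J)$ is a KKT point for $\A((X_S,-X_J),Y)$, using that symmetry of $f_J$ flips its subdifferential. Your direct change of variables $v_J=-w_J$ establishes the same bijection between feasible sets and objective values without ever invoking optimality conditions. The advantage of your route is exactly what you noted: it does not rely on convexity (which the paper needs for KKT to be necessary and sufficient), so the result actually holds for arbitrary $f_S,f_J$ with $f_J$ even. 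The paper's KKT approach buys nothing extra here; it is simply a heavier tool for the same conclusion.
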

\begin{proof}
The KKT conditions for $\A(X, y)$,
which are both necessary and sufficient in this case, are
\begin{equation} \label{eq:kkt-orig}
  X w = X_S w_S + X_J w_J = Y
  ,\qquad
  0 \in \partial f_S(w_S) + \nu_S\tp X_S
  ,\qquad
  0 \in \partial f_J(w_J) + \nu_J\tp X_J
,\end{equation}
where $\delta$ denotes the subdifferential,
and the dual variables $\nu_S \in \R^{d_S}$ and $\nu_J \in \R^{d_J}$ are otherwise unconstrained.
Also note that because $f_J$ is symmetric,
if $g \in \partial f_J$ then for any $t$, there is some $g' \in \partial f_J$ such that $g'(-t) = -g(t)$.

Let $(\hat w, \nu_S, \nu_J)$ be some solution to \eqref{eq:kkt-orig},
and define $\tilde w = \left( \hat w_S, -\hat w_J \right)$,
$\tilde X = \left( X_S, - X_J \right)$.
Then
\begin{gather*}
  \left( X_S, - X_J \right) \tilde w
  = X_S \tilde w_S - X_J \tilde w_J
  = X_S \hat w_S + X_J \hat w_J
  = Y
  ,\\
  \partial f_S(\tilde w_S) + \nu_S\tp \tilde X_S
  = \partial f_S(\hat w_S) + \nu_S\tp X_S
  \ni 0
  ,\\\text{and}\quad
  \partial f_J(\tilde w_J) + \nu_J\tp \tilde X_J
  = \partial f_J(-\hat w_J) + \nu_J\tp (-X_J)
  \ni 0
  \quad\text{because}\quad
  0 \in \partial f_J(\hat w_J) + \nu_J\tp X_J
.\end{gather*}
Thus $(\tilde w, \nu_S, \nu_J)$ satisfies the KKT conditions for $\A( \tilde X, Y )$.
When the minimizer is not unique, the same argument works in reverse, showing that solution sets are related in the same way.
\end{proof}

\section{Proofs for Section \ref{sec:interp-convergence}}
\subsection{Consistency of the minimal risk interpolator\texorpdfstring{ (\cref{prop:wmr-consistent})}{}} \label{sec:proofs:wmr-consistent}

\wmrconsistent*

\begin{proof}
Recall that
\[
    \wmr = w^* + \Sigma^{-1} X\tp (X \Sigma^{-1} X\tp)^{-1} E
.\]
From this, we can compute
\begin{equation*}
    \begin{split}
        L_{\D}(\wmr) - L_{\D}(w^*) &= (\wmr - w^*)\tp \Sigma (\wmr - w^*) \\
        &= (\wmr - w^*)\tp X\tp (X \Sigma^{-1} X\tp)^{-1} E\\
        &= (X\wmr - Xw^*)\tp (X \Sigma^{-1} X\tp)^{-1} E\\
        &= (Y - Xw^*)\tp (X \Sigma^{-1} X\tp)^{-1} E\\
        &= E \tp (ZZ\tp)^{-1} E \\
        &= \inner{ (ZZ\tp)^{-1} }{ E E\tp }
    .\end{split}
\end{equation*}
By independence of $Z$ and $E$, we get
\[
    \E [L_{\D}(\wmr) - L_{\D}(w^*)]
    = \sigma^2 \E \tr \left[ \left( ZZ\tp \right)^{-1} \right]
.\]

Note that $\left( ZZ\tp \right)^{-1}$ follows an inverse-Wishart distribution whose expectation is $\frac{I_n}{p-n-1}$. Therefore, we obtain
\begin{equation*}
    \begin{split}
        \E [L_{\D}(\wmr) ] &= \sigma^2 + \sigma^2 \tr \left( \frac{I_n}{p-n-1} \right) \\
        &= \sigma^2 \left( 1 + \frac{n}{p-n-1}\right) = \left( \frac{p-1}{p-n-1} \right) \cdot L_{\D} (w^*)
    .\qedhere\end{split}
\end{equation*}
\end{proof}

\subsection{Uniform consistency of low norm interpolators\texorpdfstring{ (\cref{sec:non-min-norm})}{}} \label{sec:proofs:non-min-norm}

\subsubsection{General results}
Our key lemma is as follows:

\begin{lemma} \label{key-lemma}
Let $\hat{w}$ be any predictor that interpolates the data, with $\norm{\hat w} \le B$, and $F \in \R^{p \times (p-n)}$  be the matrix whose columns form an orthonormal basis of the kernel of $X$.
In other words, if $X\hat{w} = Y$, $XF = 0_{n \times (p-n)} $ and $F\tp F = I_{p-n}$, then \eqref{eq:gen-gap}, the worst-case generalization gap for interpolators up to norm $B$, is equal to
\[
    L_{\D}(\hat{w}) + \inf_{\lambda > \norm{ F\tp \Sigma F}} \norm{F\tp [ \lambda \hat{w} - \Sigma(\hat{w} - w^*)] }_{(\lambda I_{p-n} - F\tp \Sigma F)^{-1}} +  \lambda (B^2 - \norm{\hat{w}}^2)
.\]
\end{lemma}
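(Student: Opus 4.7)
The approach is to reparametrize the supremum using an orthonormal basis $F$ for the kernel of $X$, recognize the result as a single-constraint QCQP, and invoke strong duality of such problems to compute the dual in closed form.

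First, since $X\hat w = Y$, every interpolator $w$ with $Xw = Y$ is uniquely of the form $w = \hat w + Fu$ for some $u \in \R^{p-n}$. For any such $w$, $L_\samp(w) = 0$, so the gap equals $L_\D(w)$. Expanding $L_\D$ about $\hat w$ via \eqref{eq:ld-and-ls} gives a quadratic in $u$, and using $F\tp F = I_{p-n}$, the norm constraint also becomes quadratic:
\[
\norm{\hat w + Fu}^2 \le B^2
\iff
\norm u^2 + 2 u\tp F\tp \hat w + \norm{\hat w}^2 - B^2 \le 0.
\]
So \eqref{eq:gen-gap} is the optimal value of a QCQP with a single ellipsoidal constraint and a convex quadratic objective being \emph{maximized}.

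Although this QCQP is nonconvex, single-constraint QCQPs enjoy strong duality by the classical S-procedure / trust-region result \citep[Appendix B.1]{convex-optimization}. Slater's condition is immediate whenever $B > \norm{\hat w}$, and the boundary case $B = \norm{\hat w}$ follows by continuity in $B$ on both sides of the identity. Hence the sup equals $\inf_{\lambda \ge 0} g(\lambda)$, where $g(\lambda) = \sup_u \mathcal L(u,\lambda)$ is the Lagrangian dual, with
\[
\mathcal L(u, \lambda) = L_\D(\hat w) + 2 u\tp F\tp \Sigma(\hat w - w^*) + u\tp F\tp \Sigma F u - \lambda\bigl[\norm u^2 + 2 u\tp F\tp \hat w + \norm{\hat w}^2 - B^2\bigr].
\]

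For fixed $\lambda \ge 0$, $\mathcal L(\cdot, \lambda)$ is quadratic in $u$ with Hessian $2(F\tp \Sigma F - \lambda I_{p-n})$, so the inner sup is $+\infty$ unless $\lambda > \norm{F\tp \Sigma F}$. In that range the optimizer is $u^*(\lambda) = (\lambda I_{p-n} - F\tp \Sigma F)^{-1} F\tp[\Sigma(\hat w - w^*) - \lambda \hat w]$, and substituting back via the identity $\sup_u (u\tp A u + 2 b\tp u) = -b\tp A^{-1} b$ (valid when $A \prec 0$) gives exactly
\[
g(\lambda) = L_\D(\hat w) + \lambda\bigl(B^2 - \norm{\hat w}^2\bigr) + \norm{F\tp[\lambda \hat w - \Sigma(\hat w - w^*)]}_{(\lambda I_{p-n} - F\tp \Sigma F)^{-1}}^2.
\]
Taking the infimum over admissible $\lambda$ finishes the proof. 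The main obstacle is justifying strong duality: the objective is maximized and the problem is nonconvex, so ordinary convex duality is inapplicable and one must appeal to the special structure of a single quadratic constraint. The remaining work — expanding the quadratics, identifying the negative-definiteness regime in $\lambda$, and rewriting the dual value in Mahalanobis form — is routine bookkeeping.
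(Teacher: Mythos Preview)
Your proposal is correct and follows essentially the same approach as the paper: reparametrize interpolators as $\hat w + Fu$, recognize a single-constraint QCQP, invoke strong duality via \cite[Appendix~B]{convex-optimization}, and read off the dual. The only cosmetic differences are that the paper first negates to cast the problem as a minimization before dualizing, and it handles the boundary value $\lambda = \norm{F\tp \Sigma F}$ explicitly rather than absorbing it into the open infimum; your treatment via Slater and continuity in $B$ is equally valid, and you correctly write the squared Mahalanobis norm (the lemma as stated drops the exponent, which is a typo).
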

\begin{proof}
Observe that $\{w \in \R^p: L_S(w) = 0 \} = \{\hat{w} + Fu: u \in \R^{p-n} \}$.
Then
\begin{equation*}
    \begin{split}
        &\sup_{\substack{\norm{w} \leq B \\ L_S(w) = 0}} L_{\D}(w) - L_S(w) \\
        &= L_\D(w^*) + \sup_{\substack{\norm{w} \leq B \\ L_S(w) = 0}} L_{\D}(w) - L_{\D}(w^*) \\
        &= L_\D(w^*) + \sup_{\substack{\norm{\hat{w} + Fu}^2 \leq B^2}} (\hat{w} + Fu - w^*)\tp \Sigma (\hat{w} + Fu - w^*)\\
        &= L_\D(w^*) + \sup_{\substack{\norm{u}^2 + 2 \langle u, F\tp \hat{w} \rangle + \norm{\hat{w}}^2 \leq B^2}} u\tp (F\tp \Sigma F) u + 2 \langle u, F\tp\Sigma (\hat{w} - w^*)\rangle + (\hat{w} - w^*)\tp \Sigma (\hat{w} - w^*) \\
        &= L_\D(\hat w) + \sup_{\substack{\norm{u}^2 + 2 \langle u, F\tp \hat{w} \rangle + \norm{\hat{w}}^2 \leq B^2}} u\tp (F\tp \Sigma F) u + 2 \langle u, F\tp\Sigma (\hat{w} - w^*)\rangle  \\
        &= L_\D(\hat w) -\inf_{\substack{\norm{u}^2 + 2 \langle u, F\tp \hat{w} \rangle + \norm{\hat{w}}^2 \leq B^2}} u\tp (-F\tp \Sigma F) u - 2 \langle u, F\tp\Sigma (\hat{w} - w^*)\rangle
.\end{split}
\end{equation*}
Although the second term involves a concave minimization problem, it is a quadratic optimization problem with a single quadratic inequality constraint.
This is a classical example where strong duality holds even though the objective is not convex \citep[Appendix B]{convex-optimization}.
In order to derive the dual, we write down the Lagrangian:
\begin{equation*}
    \begin{split}
        L(u, \lambda) &= u\tp (-F\tp \Sigma F) u - 2 \langle u, F\tp\Sigma (\hat{w} - w^*)\rangle + \lambda (\norm{u}^2 + 2 \langle u, F\tp \hat{w} \rangle + \norm{\hat{w}}^2 - B^2) \\
        &= u\tp (\lambda I_{p-n} - F\tp \Sigma F) u + 2 \langle u, F\tp(\lambda \hat{w} - \Sigma (\hat{w} - w^*)) \rangle - \lambda (B^2 - \norm{\hat{w}}^2)
    ;\end{split}
\end{equation*}
strong duality tells us that the infimum is equal to
$\sup_{\lambda \ge 0} \inf_u L(u, \lambda)$.
For $\lambda < \norm{F\tp \Sigma F}$,
$\lambda I_{p-n} - F\tp \Sigma F$ has strictly negative eigenvalues,
and so then $\inf_u L(u, \lambda) = -\infty$.
If instead $\lambda > \norm{F\tp \Sigma F}$,
$\lambda I_{p-n} - F\tp \Sigma F$ is strictly positive definite,
and setting the $u$ derivative to zero yields that $\inf_u L(\lambda, u)$ is
\begin{equation} \label{eq:the-dual}
    -\left[F\tp( \lambda \hat{w} - \Sigma(\hat{w} - w^*))\right]\tp (\lambda I_{p-n} - F\tp \Sigma F)^{-1} \left[F\tp( \lambda \hat{w} - \Sigma(\hat{w} - w^*))\right] - \lambda (B^2 - \norm{\hat{w}}^2)
.\end{equation}
If instead $\lambda = \norm{F\tp \Sigma F}$,
we again have $\inf_u L(u, \lambda) = -\infty$ unless
$
    F\tp ( \lambda \hat w - F\tp \Sigma (\hat w - w^*) ) = 0
$
so that the linear term is identically zero;
in this case,
the quadratic term is minimized by $u = 0$, and
$\inf_u L(u, \lambda) = \lambda (B^2 - \norm{\hat w}^2)$ agrees with \eqref{eq:the-dual},
so this case is covered by the strict case as well.
Thus the dual problem is to maximize \eqref{eq:the-dual} over $\lambda > \norm{F\tp \Sigma F}$.
The desired result follows by passing the minus sign into the sup of the dual problem.
\end{proof}

We will now prove \cref{thm:general-consistency}.

\generalconsistency*
\begin{proof}
For case \ref{thm-part:general-consistency:wmr},
observe that
\[
  F\tp \Sigma (\wmr - w^*)
  = F\tp X\tp (X \Sigma^{-1} X\tp)^{-1} E
  = (XF)\tp (X \Sigma^{-1} X\tp)^{-1} E
  = 0
.\]
Thus picking $\hat{w} = \wmr$ and $B = \norm\wmr$ in \cref{key-lemma} gives that
\begin{equation}
    \sup_{\norm w \le \norm \wmr,\, L_\samp(w) = 0} L_\D(w)
    =
    L_{\D}(\wmr) + \inf_{\lambda > \norm{ F\tp \Sigma F}} \norm{\lambda F\tp \wmr }_{(\lambda I_{p-n} - F\tp \Sigma F)^{-1}}
\label{eq:gen:wmr}
.\end{equation}
Since we have
\[
  \frac{1}{\lambda} I_{p-n} \preceq (\lambda I_{p-n} - F\tp \Sigma F)^{-1} 
,\]
we know that
$\sup_{\norm w \le \norm \wmr,\, L_\samp(w) = 0} L_\D(w)$
is lower bounded by 
\[
  L_{\D}(\wmr) + \inf_{\lambda > \norm{F\tp \Sigma F}} \frac{1}{\lambda} \norm{ \lambda F\tp \wmr}^2 = L_{\D}(\wmr) + \norm{F\tp \Sigma F} \cdot \norm{F\tp \wmr}^2
.\]
In order to compute $\norm{F\tp \wmr}^2$, we notice that $FF\tp$ is the orthogonal projection onto the kernel of $X$.
Using the fact that $\text{im}(X\tp) = \text{ker}(X)^{\bot}$, we get $I - FF\tp$ is the orthogonal projection onto the image of $X\tp$.
Thus,
\[
  X(I - FF\tp)\wmr = X\wmr = Y
,\]
and left-multiplying both sides by $X\tp (X X\tp)^{-1}$
gives that
\[
  \wmn
  = X\tp (X X\tp)^{-1} X (I - FF\tp) \wmr
  = (I - FF\tp) \wmr
,\]
and so
\begin{equation*}
    \begin{split}
        \norm{F\tp \wmr}^2 & = \wmr\tp F F\tp \wmr\\
        & = \wmr\tp F(F\tp F)F\tp \wmr\\
        & = \norm{FF\tp \wmr}^2 \\
        &= \norm{\wmr}^2 - \norm{(I-FF\tp)\wmr}^2\\
        &= \norm{\wmr}^2 - \norm{\wmn}^2\\
    \end{split}
\end{equation*}
which establishes the lower bound with a constant of $1$.

Similarly,
we can
use $(\lambda I_{p-n} - F\tp \Sigma F)^{-1} \preceq \frac{1}{\lambda - \norm{F\tp \Sigma F}} I_{p-n}$
to upper bound \eqref{eq:gen:wmr} as
\begin{equation*}
    \begin{split}
        &L_{\D}(\wmr) + \inf_{\lambda > \norm{F\tp \Sigma F}} \frac{1}{\lambda - \norm{F\tp \Sigma F}} \norm{ \lambda F\tp \wmr}^2 \\
        = \, &L_{\D}(\wmr) + \inf_{\lambda > 0} \, \, \frac{(\lambda + \norm{F\tp \Sigma F})^2}{\lambda} (\norm{\wmr}^2 - \norm{\wmn}^2)\\
        = \, &L_{\D}(\wmr) + \inf_{\lambda > 0} \, \, \left( \lambda + 2 \norm{F\tp \Sigma F} + \frac{\norm{F\tp \Sigma F}^2}{\lambda} \right) (\norm{\wmr}^2 - \norm{\wmn}^2)\\
        = \, &L_{\D}(\wmr) + 4 \norm{F\tp \Sigma F} \cdot (\norm{\wmr}^2 - \norm{\wmn}^2)
    .\end{split}
\end{equation*}
This gives the desired upper bound with a constant of $4$.
It follows immediately that \eqref{eq:gen:wmr} converges to $L_{\D}(w^*)$ if and only if
\[ \E \norm{F\tp \Sigma F} \cdot (\norm{\wmr}^2 - \norm{\wmn}^2) \to 0 .\]

Turning to part \ref{thm-part:general-consistency:ball},
observe that
\[
    F\tp \wmn = F\tp X\tp (XX\tp)^{-1} Y = (XF)\tp (XX\tp)^{-1} Y = 0
,\]
so that \cref{key-lemma} with $\hat w = \wmn$ gives
\[
    \sup_{\norm w \le B_n\, L_\samp(w) = 0} L_\D(w)
    =
    L_\D(\wmn) + \inf_{\lambda > \norm{ F\tp \Sigma F}} \norm{F\tp \Sigma(\hat{w} - w^*) }_{(\lambda I_{p-n} - F\tp \Sigma F)^{-1}} +  \lambda (B_n^2 - \norm{\wmn}^2)
.\]
Moreover, it is clear that
\[
    0_{p-n} \prec (\lambda I_{p-n} - F\tp \Sigma F)^{-1} \prec \frac{1}{\lambda - \norm{F\tp \Sigma F}} I_{p-n}
.\]
Therefore, $\sup_{\norm w \le B_n, \, L_\samp(w) = 0} L_\D(w)$ is lower bounded by,
recalling that $\norm{F\tp \Sigma F} = \kappa_X(\Sigma)$,
\begin{equation}
    L_{\D}(\wmn) + \inf_{\lambda > \norm{F\tp \Sigma F}} \lambda (B_n^2 - \norm{\wmn})
    = L_{\D}(\wmn) + \kappa_X(\Sigma) \cdot \left[ B_n^2 - \norm{\wmn}^2 \right]
\label{eq:gen:ball:lower}
,\end{equation}
and we have shown that $R_n \ge 0$ in the result.
On the other hand, $\sup_{\norm w \le B_n, \, L_\samp(w) = 0} L_\D(w)$ is upper bounded by
\begin{align*}
    &L_{\D}(\wmn) + \inf_{\lambda > \norm{F\tp \Sigma F}} \frac{1}{\lambda- \norm{F\tp \Sigma F}} \norm{ F\tp \Sigma(\wmn - w^*)}^2 + \lambda \Big[ B_n^2 - \norm{\wmn}^2 \Big] \\
    = \, & L_{\D}(\wmn) + \inf_{\lambda > 0} \, \, \frac{1}{\lambda} \norm{ F\tp \Sigma(\wmn - w^*)}^2 + (\lambda + \kappa_X(\Sigma)) \Big[ B_n^2 - \norm{\wmn}^2 \Big]\\
    = \, & L_{\D}(\wmn) +  \kappa_X(\Sigma) \cdot \Big[ B_n^2 - \norm{\wmn}^2 \Big] + \inf_{\lambda > 0} \, \, \frac{1}{\lambda} \norm{ F\tp \Sigma(\wmn - w^*)}^2 + \lambda \Big[ B_n^2 - \norm{\wmn}^2 \Big] \\
    = \, & L_{\D}(\wmn) +  \kappa_X(\Sigma) \cdot \Big[ B_n^2 - \norm{\wmn}^2 \Big] + 2 \sqrt{\norm{F\tp \Sigma(\wmn - w^*)}^2 \cdot \Big[ B_n^2 - \norm{\wmn}^2 \Big] }
    \tagthis\label{eq:gen:ball:upper}
.\end{align*}
We can upper bound
\begin{equation*}
    \begin{split}
        \norm{F\tp \Sigma(\wmn - w^*)}^2
        &= (\wmn - w^*)\tp \Sigma F F\tp \Sigma(\wmn - w^*)\\
        &= [\Sigma^{1/2}(\wmn - w^*)]\tp (\Sigma^{1/2} F F\tp \Sigma^{1/2}) [\Sigma^{1/2}(\wmn - w^*)]\\
        &\leq \norm{\Sigma^{1/2} F F\tp \Sigma^{1/2}} \cdot \norm{\Sigma^{1/2}(\wmn - w^*)}^2\\
        &= \norm{F\tp \Sigma F} \cdot [L_{\D}(\wmn) - L_{\D}(w^*) ]\\
    \end{split}
,\end{equation*}
using the fact that $\norm{AA^T} = \norm{A^T A}$ with $A = F^T \Sigma^{1/2}$. 
Plugging into the third term of \eqref{eq:gen:ball:upper} yields our desired upper bound on $R_n$,

To show the statement about expectations when $\E L_\D(\wmn) - L_\D(w^*) \to 0$,
note for one direction that \eqref{eq:gen:ball:lower} gives
\[
        \liminf_{n \to \infty} \E \left[ \sup_{\substack{\norm w \leq B_n \\ L_\samp(w) = 0}} L_\D(w) - L_\samp(w) \right]
        \geq L_\D(w^*) + \lim_{n \to \infty} \E \kappa_X(\Sigma) \cdot \Big[ B_n^2 - \norm{\wmn}^2 \Big]
.\]
For the other direction, we have
\begin{multline*}
        R_n \le 
        2 \sqrt{ \norm{F\tp \Sigma F} \cdot [L_{\D}(\wmn) - L_{\D}(w^*) ] \Big[ B_n^2 - \norm{\wmn}^2 \Big] } \\
        \leq \epsilon \norm{F\tp \Sigma F} \cdot \Big[ B_n^2 - \norm{\wmn}^2 \Big]  + \frac{1}{\epsilon} [L_{\D}(\wmn) - L_{\D}(w^*) ] \\
\end{multline*}
for any $\epsilon > 0$. This implies 
\[
        \limsup_{n \to \infty} \E \left[ \sup_{\substack{\norm w \leq B_n \\ L_\samp(w) = 0}} L_\D(w) - L_\samp(w) \right]
        \leq L_\D(w^*) + (1 + \epsilon) \E \left( \lim_{n \to \infty} \kappa_X(\Sigma) \cdot \Big[ B_n^2 - \norm{\wmn}^2 \Big] \right)
,\]
showing the desired result.
\end{proof}

\subsubsection{Special case of Setting \ref{setting:junk-feats}}
In \cref{setting:junk-feats},
we are able to compute $\kappa_X(\Sigma)$.
\begin{prop} \label{thm:junk-re}
With probability 1, it holds in \cref{setting:junk-feats} that
\[
    \lim_{d_J \to \infty} \kappa_X(\Sigma)
    = \frac{\lambda_n}{n} \norm*{ \left[
        \frac{X_S\tp X_S}{n} + \frac{\lambda_n}{n} I_{d_S}
    \right]^{-1}  }
.\]
\end{prop}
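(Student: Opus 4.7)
The plan is to parametrize the kernel of $X$ by its signal/junk split, reduce the restricted eigenvalue to a quadratic program in the signal block $w_S$ alone by exploiting the slack in $\ker(X_J)$, and finally take $d_J \to \infty$ using the almost-sure convergence $X_J X_J\tp \asto \lambda_n I_n$. The starting observation is that under $\|w\|^2 = 1$,
\[
    w\tp \Sigma w = \|w_S\|^2 + \frac{\lambda_n}{d_J}\|w_J\|^2 = \left(1 - \frac{\lambda_n}{d_J}\right)\|w_S\|^2 + \frac{\lambda_n}{d_J}
,\]
so up to an $O(1/d_J)$ slack, maximizing $w\tp \Sigma w$ is equivalent to maximizing $\|w_S\|^2$ alone.

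\textbf{Reduction to a signal-only problem.} For $d_J > n$, $X_J$ has full row rank $n$ almost surely, and every $w_J$ with $X_J w_J = -X_S w_S$ decomposes orthogonally as $w_J = w_J^{\min} + \eta$, where $w_J^{\min} := -X_J\tp (X_J X_J\tp)^{-1} X_S w_S$ is the minimum-norm solution and $\eta \in \ker(X_J)$. Since $\dim \ker(X_J) = d_J - n \ge 1$, any nonnegative $\|\eta\|^2$ is realizable, so the norm constraint $\|w\|^2 = 1$ collapses to
\[
    w_S\tp \bigl( I_{d_S} + X_S\tp (X_J X_J\tp)^{-1} X_S \bigr) w_S \le 1
.\]
Writing $M = I_{d_S} + X_S\tp (X_J X_J\tp)^{-1} X_S$ and substituting $v = M^{1/2} w_S$, a standard Rayleigh-quotient argument gives $\sup \|w_S\|^2 = \norm{M^{-1}}$, attained at $w_S$ proportional to the bottom eigenvector of $M$ (where saturation of the constraint leaves $\|\eta\|^2 = 0$).

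\textbf{Passing to the limit.} By the strong law, $X_J X_J\tp = \frac{\lambda_n}{d_J} Z_J Z_J\tp \asto \lambda_n I_n$, so by continuity of matrix inversion and of the operator norm,
\[
    \lim_{d_J \to \infty} \norm{M^{-1}}
    \aseq \norm*{ \bigl(I_{d_S} + \lambda_n^{-1} X_S\tp X_S\bigr)^{-1} }
    = \lambda_n \norm*{(X_S\tp X_S + \lambda_n I_{d_S})^{-1}}
,\]
which, after dividing inside the norm by $n$, matches the claimed expression $\frac{\lambda_n}{n} \norm{(\frac{X_S\tp X_S}{n} + \frac{\lambda_n}{n} I_{d_S})^{-1}}$. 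The additive $\lambda_n / d_J$ slack vanishes.

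\textbf{Main obstacle.} The delicate step is verifying that the sup over $w$ really reduces to the Rayleigh problem in $w_S$: one needs both the full-rank claim for $X_J$ (which holds a.s.\ for $d_J > n$) and the observation that the Rayleigh optimum saturates the constraint, so the resulting $(w_S, w_J^{\min}, 0)$ is feasible in the original problem. The limit exchange is routine given continuity of the spectral functional on positive-definite matrices.
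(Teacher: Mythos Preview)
Your argument is correct and takes a genuinely different route from the paper.

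The paper expresses $\kappa_X(\Sigma)$ as the operator norm of the $p\times p$ matrix $\Sigma^{1/2}(I - X\tp(XX\tp)^{-1}X)\Sigma^{1/2}$, writes out its $2\times 2$ block structure, and then argues separately for matching $\liminf$ and $\limsup$: the lower bound comes from the upper-left block, while the upper bound requires bounding the quadratic form at an arbitrary unit vector and handling the cross term with an $\epsilon$-argument, because the ambient dimension grows with $d_J$ and one cannot invoke continuous mapping directly. Your approach sidesteps this entirely: by parametrizing $\ker(X)$ through the signal/junk split and absorbing the slack into $\ker(X_J)$, you reduce to a fixed-dimensional Rayleigh problem in $\R^{d_S}$ and obtain the exact finite-$d_J$ identity
\[
\kappa_X(\Sigma) = \Bigl(1 - \tfrac{\lambda_n}{d_J}\Bigr)\,\bigl\lVert (I_{d_S} + X_S\tp (X_J X_J\tp)^{-1} X_S)^{-1}\bigr\rVert + \tfrac{\lambda_n}{d_J},
\]
from which the limit follows immediately by continuity on $d_S\times d_S$ matrices. (Via Woodbury, the matrix $(I_{d_S} + X_S\tp (X_J X_J\tp)^{-1} X_S)^{-1}$ coincides with the paper's upper-left block $I_{d_S} - X_S\tp(X_SX_S\tp + X_JX_J\tp)^{-1}X_S$, so the two computations agree.) Your route is more elementary and gives a sharper non-asymptotic statement; the paper's route is more projection-theoretic and would generalize more readily to settings where the constraint set does not split so cleanly.
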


\begin{proof}
Recall that 
\[ 
    \kappa_X(\Sigma) = \norm{F\tp \Sigma F} = \norm{\Sigma^{1/2} FF\tp \Sigma^{1/2}} = \norm{\Sigma^{1/2} (I - X\tp(XX\tp)^{-1} X) \Sigma^{1/2}} 
.\]
It is a routine calculation to show that
\[
  \Sigma^{1/2} FF\tp \Sigma^{1/2}
  = \begin{bmatrix}
     I_{d_S} - X_S\tp (X_S X_S\tp + X_J X_J\tp)^{-1} X_S
     &
     -\sqrt{\frac{\lambda_n}{d_J}} X_S\tp (X_S X_S\tp + X_J X_J\tp)^{-1} X_J
     \\
     -\sqrt{\frac{\lambda_n}{d_J}} X_J\tp (X_S X_S\tp + X_J X_J\tp)^{-1} X_S
     &
     \frac{\lambda_n}{d_J} \left[ I_{d_J} - X_J\tp (X_S X_S\tp + X_J X_J\tp)^{-1} X_J \right]
  \end{bmatrix}
 .\]
Intuitively, since only the upper-left block does not vanish as $d_J \to \infty$, we should expect 
\[
  \lim_{d_J \to \infty} \, \kappa_X(\Sigma) = \norm{I_{d_S} - X_S\tp (X_S X_S\tp + \lambda_n I_n)^{-1} X_S }
.\] 
However, as the dimensions of $\Sigma^{1/2} FF\tp \Sigma^{1/2}$ also increase with $d_J$, the analysis of $\kappa_X(\Sigma)$ requires more care. 

It is clear that $\kappa_X(\Sigma) \geq \norm{ I_{d_S} - X_S\tp (X_S X_S\tp + X_J X_J\tp)^{-1} X_S }$, and so
\[
  \liminf_{d_J \to \infty} \, \kappa_X(\Sigma) \geq \norm{ I_{d_S} - X_S\tp (X_S X_S\tp + \lambda_n I_n)^{-1} X_S }
.\]
To upper bound the limit, fix any $v = (v_1, v_2)$ such that $v_1 \in \R^{d_S}$, $v_2 \in \R^{d_J}$ and $\norm{v} = 1$.
We can write
\begin{equation}
\begin{multlined}
  v\tp \Sigma^{1/2} FF\tp \Sigma^{1/2} v
  = v_1\tp (I_{d_S} - X_S\tp (X_S X_S\tp + X_J X_J\tp)^{-1} X_S ) v_1
  \\
  + \frac{\lambda_n}{d_J}  v_2\tp \left[ I_{d_J} - X_J\tp (X_S X_S\tp + X_J X_J\tp)^{-1} X_J \right] v_2
  \\
  - 2 \sqrt{\frac{\lambda_n}{d_J}}  v_1\tp  X_S\tp (X_S X_S\tp + X_J X_J\tp)^{-1} X_J v_2
.\end{multlined}
\label{eq:junk:kappa-split}
\end{equation}
The first term is upper bounded by 
\[
  \norm{ I_{d_S} - X_S\tp (X_S X_S\tp + X_J X_J\tp)^{-1} X_S } \cdot \norm{v_1} \leq \norm{ I_{d_S} - X_S\tp (X_S X_S\tp + X_J X_J\tp)^{-1} X_S }
,\]
and the second term is upper bounded by $\lambda_n/d_J$, because
\[
  v_2\tp v_2 \leq 1
  \qquad \text{ and } \qquad
  v_2\tp X_J\tp (X_S X_S\tp + X_J X_J\tp)^{-1} X_J v_2 \geq 0
.\]

For any $\epsilon > 0$, we have
\begin{equation*}
    \begin{split}
        &- 2 \sqrt{\frac{\lambda_n}{d_J}}  v_1\tp  X_S\tp (X_S X_S\tp + X_J X_J\tp)^{-1} X_J v_2 \\
        \leq & \, 2 \norm{ v_1\tp  X_S\tp (X_S X_S\tp + X_J X_J\tp)^{-1/2}} \cdot \norm*{ \sqrt{\frac{\lambda_n}{d_J}} (X_S X_S\tp + X_J X_J\tp)^{-1/2} X_J v_2 }\\
        \leq & \, \epsilon \norm{ v_1\tp  X_S\tp (X_S X_S\tp + X_J X_J\tp)^{-1/2}}^2 + \frac{1}{\epsilon} \norm*{ \sqrt{\frac{\lambda_n}{d_J}} (X_S X_S\tp + X_J X_J\tp)^{-1/2} X_J v_2 }^2\\
        \leq & \, \epsilon \norm{X_S\tp (X_S X_S\tp + X_J X_J\tp)^{-1} X_S} + \frac{\lambda_n}{\epsilon \, d_J} \norm{X_J\tp (X_S X_S\tp + X_J X_J\tp)^{-1} X_J}\\
        = & \, \epsilon \norm{X_S\tp (X_S X_S\tp + X_J X_J\tp)^{-1} X_S} + \frac{\lambda_n}{\epsilon \, d_J} \norm{ (X_S X_S\tp + X_J X_J\tp)^{-1/2} X_J X_J\tp (X_S X_S\tp + X_J X_J\tp)^{-1/2}}
    .\end{split}
\end{equation*}

Taking a supremum over $v$ in \eqref{eq:junk:kappa-split}, we get
\begin{equation*}
    \begin{split}
        \kappa_X(\Sigma) &\leq \norm{ I_{d_S} - X_S\tp (X_S X_S\tp + X_J X_J\tp)^{-1} X_S } + \epsilon \norm{ X_S\tp (X_S X_S\tp + X_J X_J\tp)^{-1} X_S}\\
        & \qquad + \frac{\lambda_n}{d_J} \left[ 1 + \frac{1}{\epsilon} \norm{ (X_S X_S\tp + X_J X_J\tp)^{-1/2} X_J X_J\tp (X_S X_S\tp + X_J X_J\tp)^{-1/2}} \right]
    .\end{split}
\end{equation*}
Note that 
\[
\begin{multlined}
  \lim_{d_J \to \infty} \norm{ (X_S X_S\tp + X_J X_J\tp)^{-1/2} X_J X_J\tp (X_S X_S\tp + X_J X_J\tp)^{-1/2}}
  \\
  = \lambda_n \norm{(X_S X_S\tp + \lambda_n I_n)^{-1} } < \infty
,\end{multlined}
\]
so for any $\epsilon > 0$, 
\[
  \limsup_{d_J \to \infty } \, \kappa_X(\Sigma)
  \leq \norm{ I_{d_S} - X_S\tp (X_S X_S\tp + \lambda_n I_n)^{-1} X_S } + \epsilon \norm{X_S\tp (X_S X_S\tp + \lambda_n I_n)^{-1} X_S}
.\]
Sending $\epsilon \to 0$ matches the $\liminf$ and $\limsup$.
Finally, because 
\[
  (X_S X_S\tp + \lambda_n I_n)^{-1} X_S = X_S (X_S\tp X_S + \lambda_n I_{d_S})^{-1}
,\]
we have
\begin{equation*}
    \begin{split}
        I_{d_S} - X_S\tp (X_S X_S\tp + \lambda_n I_n)^{-1} X_S &= I_{d_S} - X_S\tp X_S (X_S\tp X_S + \lambda_n I_{d_S})^{-1} \\
        &= \lambda_n (X_S\tp X_S + \lambda_n I_{d_S})^{-1} \\
        &= \frac{\lambda_n}{n} \left[
        \frac{X_S\tp X_S}{n} + \frac{\lambda_n}{n} I_{d_S} \right]^{-1}  \\
    \end{split}
\end{equation*}
and the proof is concluded.
\end{proof}

\begin{prop} \label{thm:product}
In \cref{setting:junk-feats}, it holds that
\begin{gather*}
    \lim_{n \to \infty} \lim_{d_J \to \infty} \E \kappa_X(\Sigma) \cdot \norm{\wmn}^2  = L_{\D}(w^*)
    ,\\
    \lim_{n \to \infty} \lim_{d_J \to \infty} \E \kappa_X(\Sigma) \cdot \left[ \norm{\wmr}^2 - \norm{\wmn}^2\right] = 0
.\end{gather*}
\end{prop}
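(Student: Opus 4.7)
The plan is to combine the characterization of $\kappa_X(\Sigma)$ from \cref{thm:junk-re} with the almost-sure $d_J\to\infty$ limits of $\norm{\wmn}^2$ and $\norm{\wmr}^2$ extracted from the proof of \cref{thm:mn-mr-norms}, exploit independence of $E$ and $X_S$ to take Gaussian expectations over $E$, and then pass to the $n\to\infty$ limit using $\frac{X_S\tp X_S}{n}\to I_{d_S}$ almost surely together with $\lambda_n/n\to 0$.

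First I would fix $n$ and send $d_J\to\infty$. Setting $K_n:=\frac{\lambda_n}{n}\norm*{\left[\frac{X_S\tp X_S}{n}+\frac{\lambda_n}{n}I_{d_S}\right]^{-1}}$ and $M:=X_S X_S\tp+\lambda_n I_n$, \cref{thm:junk-re} gives $\kappa_X(\Sigma)\asto K_n$, while the proof of \cref{thm:mn-mr-norms} shows
\[
\norm{\wmn}^2 \asto (X_S w_S^*+E)\tp M^{-1}(X_S w_S^*+E)
\quad\text{and}\quad
\norm{\wmr}^2 \asto \norm{w^*}^2+\norm{E}^2/\lambda_n.
\]
Since $\norm\Sigma=1$ as soon as $d_J\ge\lambda_n$, we have $\kappa_X(\Sigma)\le 1$ deterministically, so $\kappa_X(\Sigma)\norm{\wmn}^2\le\norm{\wmr}^2$; the $L^1$ integrability of $\norm{\wmr}^2$ established in \cref{thm:mn-mr-norms} then lets me exchange $\lim_{d_J\to\infty}$ with $\E$ via dominated convergence in both products of interest.

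For the first identity, I would then expand
\[
(X_S w_S^*+E)\tp M^{-1}(X_S w_S^*+E)=(w_S^*)\tp X_S\tp M^{-1}X_S w_S^*+2(w_S^*)\tp X_S\tp M^{-1}E+E\tp M^{-1}E
\]
and integrate out $E$ using independence: the cross term vanishes, $(w_S^*)\tp X_S\tp M^{-1}X_S w_S^*\le\norm{w^*}^2$ contributes at most $\norm{w^*}^2\,\E K_n=\mathcal{O}(\lambda_n/n)\to 0$, and $\E_E[E\tp M^{-1}E]=\sigma^2\tr(M^{-1})$. Because $X_S X_S\tp$ is almost surely rank $d_S$, we have $\tr(M^{-1})=\tr((X_S\tp X_S+\lambda_n I_{d_S})^{-1})+(n-d_S)/\lambda_n$, and the first summand is $\mathcal{O}(d_S/n)$ almost surely, so its product with $K_n=\mathcal{O}(\lambda_n/n)$ vanishes in expectation. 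The surviving contribution is
\[
\sigma^2\,\tfrac{n-d_S}{\lambda_n}\,\E K_n=\sigma^2\,\tfrac{n-d_S}{n}\,\E\norm*{\left[\tfrac{X_S\tp X_S}{n}+\tfrac{\lambda_n}{n}I_{d_S}\right]^{-1}},
\]
which converges to $\sigma^2=L_\D(w^*)$ by the strong law and a second dominated-convergence pass whose integrable majorant is $\norm{(X_S\tp X_S/n)^{-1}}$, controlled by standard inverse-Wishart moment formulas valid for $n\ge d_S+2$.

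For the second identity, rather than analyze $\norm{\wmr}^2-\norm{\wmn}^2$ directly, I would compute $\E\kappa_X(\Sigma)\norm{\wmr}^2$ by the same recipe: $\E K_n\norm{w^*}^2=\mathcal{O}(\lambda_n/n)\to 0$, while $\frac{1}{\lambda_n}\E[K_n\norm{E}^2]=\frac{\sigma^2 n}{\lambda_n}\E K_n\to\sigma^2$ by independence of $E$ and $X_S$. Hence $\E\kappa_X(\Sigma)\norm{\wmr}^2\to\sigma^2$, and subtracting the first identity yields the second. The principal technical obstacle is executing the two nested dominated-convergence interchanges cleanly: the $d_J\to\infty$ step is handled by $\kappa_X(\Sigma)\le 1$ together with $\norm{\wmn}^2\le\norm{\wmr}^2$ and the $L^1$ bound from \cref{thm:mn-mr-norms}, while the $n\to\infty$ step relies on integrability of $\norm{(X_S\tp X_S/n)^{-1}}$ from Wishart theory.
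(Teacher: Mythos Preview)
Your proposal is correct and follows essentially the same route as the paper: dominate $\kappa_X(\Sigma)\norm{\wmn}^2$ by $\norm{\wmr}^2$ (using $\kappa_X(\Sigma)\le\norm\Sigma=1$) to pass the $d_J\to\infty$ limit inside the expectation, substitute the explicit limits from \cref{thm:junk-re} and \cref{thm:mn-mr-norms}, exploit independence of $E$ and $X_S$, and handle the $n\to\infty$ limit via a second dominated-convergence step with an inverse-Wishart majorant; the second identity then follows by computing the same limit for $\kappa_X(\Sigma)\norm{\wmr}^2$ and subtracting. The only stylistic difference is that you expand the quadratic form $(X_S w_S^*+E)\tp M^{-1}(X_S w_S^*+E)$ term-by-term and bound each piece directly, whereas the paper threads the argument through a longer chain of nested DCT interchanges before invoking the closed-form expression from \cref{thm:mn-mr-norms}; your version is arguably cleaner, but the substance is the same.
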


\begin{proof}
Notice that $\kappa_X(\Sigma) \cdot \norm{\wmn}^2$ can be dominated by $\norm{\Sigma} \cdot \norm{\wmr}^2$ and \cref{thm:mn-mr-norms} showed that $\norm{\wmr}^2$ is integrable, so by the dominated convergence theorem,
\[
  \lim_{d_J \to \infty} \E \, \kappa_X(\Sigma) \cdot \norm{\wmn}^2 =  \E \lim_{d_J \to \infty} \kappa_X(\Sigma) \cdot \norm{\wmn}^2
.\] 
Similarly, $\, \lim_{d_J \to \infty} \kappa_X(\Sigma) \cdot \norm{\wmn}^2 \,$  can be dominated by 
\[ \lim_{d_J \to \infty} \kappa_X(\Sigma) \cdot \norm{\wmr}^2 \aseq \frac{\lambda_n}{n} \norm*{ \left[
    \frac{X_S\tp X_S}{n} + \frac{\lambda_n}{n} I_{d_S} \right]^{-1} } \cdot \left( \norm{w^*}^2 + \frac{\norm{E}^2}{\lambda_n} \right) \]
according to \cref{thm:junk-re,thm:mn-mr-norms}. 

As
$\norm*{ \left[ \frac{X_S\tp X_S}{n} + \frac{\lambda_n}{n} I_{d_S} \right]^{-1} } \asto 1$
and
$\frac{\norm{E}^2}{n} \asto \sigma^2$,
we have
\[ \lim_{n \to \infty} \lim_{d_J \to \infty} \kappa_X(\Sigma) \cdot \norm{\wmr}^2 \aseq \sigma^2 .\] 
Moreover, by independence of $X_S$ and $E$
\[ \E \lim_{d_J \to \infty} \kappa_X(\Sigma) \cdot \norm{\wmr}^2 = \left( \frac{\lambda_n \norm{w^*}^2}{n} + \sigma^2 \right) \cdot 
\E \norm*{ \left[ \frac{X_S\tp X_S}{n} + \frac{\lambda_n}{n} I_{d_S} \right]^{-1} }  .\] 
Again,
$\norm*{ \left[ \frac{X_S\tp X_S}{n} + \frac{\lambda_n}{n} I_{d_S} \right]^{-1} }$
can be dominated by $\tr \left ( \left( \frac{X_S\tp X_S}{n} \right)^{-1} \right)$,
so that
\[
  \lim_{n \to \infty} \E \lim_{d_J \to \infty} \kappa_X(\Sigma) \cdot \norm{\wmr}^2
  = \sigma^2
  = \E \lim_{n \to \infty} \lim_{d_J \to \infty} \kappa_X(\Sigma) \cdot \norm{\wmr}^2
.\]
It is also straightforward to check that
\[
  \lim_{n \to \infty} \E \frac{\lambda_n}{n} \left( \lim_{d_J \to \infty}  \norm{\wmr}^2 \right)
  = \sigma^2
  = \E \lim_{n \to \infty} \frac{\lambda_n}{n} \cdot \left( \lim_{d_J \to \infty} \norm{\wmr}^2 \right)
.\] 
Another application of DCT shows that
\begin{equation*}
    \begin{split}
        \lim_{n \to \infty} \lim_{d_J \to \infty} \E \, \kappa_X(\Sigma) \cdot \norm{\wmn}^2 &= \lim_{n \to \infty}  \E \lim_{d_J \to \infty} \kappa_X(\Sigma) \cdot \norm{\wmn}^2\\
        &=  \E \lim_{n \to \infty}  \lim_{d_J \to \infty} \kappa_X(\Sigma) \cdot \norm{\wmn}^2\\
        &= \E \lim_{n \to \infty} \frac{\lambda_n}{n} \norm*{ \left[
        \frac{X_S\tp X_S}{n} + \frac{\lambda_n}{n} I_{d_S}
        \right]^{-1}  } \cdot \left( \lim_{d_J \to \infty} \norm{\wmn}^2 \right)\\
        &= \E \lim_{n \to \infty} \frac{\lambda_n}{n} \cdot \left( \lim_{d_J \to \infty} \norm{\wmn}^2 \right)
    .\end{split}
\end{equation*}
Using the fact that
\[ \frac{\lambda_n}{n} \cdot \left( \lim_{d_J \to \infty} \norm{\wmn}^2 \right) \leq \frac{\lambda_n}{n} \cdot \left( \lim_{d_J \to \infty} \norm{\wmr}^2 \right)\]
and $\norm{\wmn}^2 \leq \norm{\wmr}^2$, two final applications of DCT give
\begin{equation*}
    \begin{split}
        \lim_{n \to \infty} \lim_{d_J \to \infty} \E \, \kappa_X(\Sigma) \cdot \norm{\wmn}^2 &= \lim_{n \to \infty} \frac{\lambda_n}{n} \, \left( \E  \lim_{d_J \to \infty} \norm{\wmn}^2 \right) \\
        &= \lim_{n \to \infty} \frac{\lambda_n}{n} \, \left( \lim_{d_J \to \infty}  \E \norm{\wmn}^2 \right) \\
        &=  \lim_{n \to \infty} \frac{\lambda_n}{n} \, \left[ \norm{w^*}^2 + \sigma^2 \frac{n-d_S}{\lambda_n} + \beta_n \left( \frac{\sigma^2 d_S - \lambda_n \norm{w_S^*}^2 }{n} \right) \right] \\
        &= \sigma^2
    .\end{split}
\end{equation*}
by \cref{thm:mn-mr-norms}. Consequently, we have established
\[
  \lim_{n \to \infty} \lim_{d_J \to \infty} \E\left[ \kappa_X(\Sigma) \cdot \left( \norm{\wmr}^2 - \norm{\wmn}^2\right) \right]
  = 0
.\qedhere\]
\end{proof}

We are finally ready to prove \cref{thm:junk-ball,thm:junk-wmr}.

\vspace{1ex}
\junkwmr*
\begin{proof}
Recall in the proof of \cref{thm:general-consistency},
it is shown that
\[
  \sup_{\norm{w} \le \norm\wmr, \, L_\samp(w) = 0} L_\D(w)
  \le
  L_{\D}(\wmr) + 4 \, \kappa_X(\Sigma) \cdot \left[ \norm{\wmr}^2 - \norm{\wmn}^2\right]
.\]
\cref{prop:wmr-consistent} implies that
\[
  \lim_{d_J \to \infty} \E  \, L_{\D}(\wmr) = L_{\D}(w^*)
.\]
Combined with \cref{thm:product}, we have shown
\[ 
        \lim_{n \to \infty} \lim_{d_J \to \infty} \E\left[
            \sup_{\norm{w} \le \norm\wmr, \, L_\samp(w) = 0} L_\D(w) - L_\samp(w) 
        \right]
        \leq L_{\D}(w^*)
.\]
On the other hand, we have the trivial lower bound
\[ 
        \lim_{n \to \infty} \lim_{d_J \to \infty} \E\left[
            \sup_{\substack{\norm{w} \le \norm\wmr\\L_\samp(w) = 0}} L_\D(w) - L_\samp(w) 
        \right]
        \geq \lim_{n \to \infty} \lim_{d_J \to \infty} \E L_{\D}(\wmr)
        =  L_{\D}(w^*)
. \qedhere \]
\end{proof}

\vspace{3ex}
\junkball*
\begin{proof}
In the proof of \cref{thm:general-consistency}, it is shown for every $\epsilon \geq 0$ that
\[
\sup_{\substack{\norm w \leq B_n \\ L_\samp(w) = 0}} L_\D(w) - L_\samp(w)
        \leq L_\D(\wmn) + (1 + \epsilon)  \kappa_X(\Sigma) \cdot \Big[ B_n^2 - \norm{\wmn}^2 \Big] + \frac{1}{\epsilon} [L_{\D}(\wmn) - L_{\D}(w^*) ]
.\]
\Cref{thm:junk-wmr} implies that $\lim_{n \to \infty} \lim_{d_J \to \infty} \E L_\D(\wmn) = L_\D(w^*)$.
Thus, plugging in $B_n = \alpha_n \norm{\wmn}$ and taking expectations and limits on both sides gives
\[
  \lim_{n \to \infty} \lim_{d_J \to \infty} \E
  \left[ \sup_{\substack{\norm w \leq \alpha_n \norm\wmn \\ L_\samp(w) = 0}} L_\D(w) \right]
  \leq L_\D(w^*) + (1 + \epsilon) \lim_{n\to\infty} \lim_{d_J \to \infty} \E (\alpha_n^2 - 1) \kappa_X(\Sigma) \norm{\wmn}^2
;\]
further applying \cref{thm:product} yields
\[
  \lim_{n \to \infty} \lim_{d_J \to \infty} \E
  \left[ \sup_{\substack{\norm w \leq \alpha_n \norm\wmn \\ L_\samp(w) = 0}} L_\D(w) \right]
    \leq L_\D(w^*) + (1+\epsilon) (\alpha^2 - 1) L_\D(w^*)
.\]
Sending $\epsilon \to 0$ yields the upper bound $\alpha^2 L_\D(w^*)$.

To get the lower bound, in the proof of \cref{thm:general-consistency} it is also shown
\[
  \sup_{\substack{\norm w \leq B_n \\ L_\samp(w) = 0}} L_\D(w) - L_\samp(w)
  \ge L_\D(\wmn) + \kappa_X(\Sigma) \cdot \Big[ B_n^2 - \norm{\wmn}^2 \Big]
.\]
By \cref{thm:product}, letting $B_n = \alpha_n \norm\wmn$ we obtain
\[ 
  \lim_{n \to \infty} \lim_{d_J \to \infty}
  \E\left[ \sup_{\norm{w} \le \alpha_n \norm\wmn, \, L_\samp(w) = 0} L_\D(w) \right]
  \geq L_\D(w^*) + (\alpha^2 - 1) L_\D(w^*) = \alpha^2 L_\D(w^*)
\]
and the proof is concluded.
\end{proof}

\end{document}